\theoremstyle{plain}
\theoremstyle{definition}
\theoremstyle{remark}
\newcommand{\cut}[1]{}
\newcommand{\ourModel}{Optimization-Inspired Few-Shot Adaptation}
\newcommand{\ouracronym}{OFA}
\title{Optimization-Inspired Few-Shot Adaptation for Large Language Models}
\author{
  Boyan Gao \\
  University of Oxford 
  \And
  Xin Wang \\
  University of Oxford 
  \And
  Yibo Yang \\
  KAUST  
  \And
  David Clifton \\
  University of Oxford
}
\begin{document}
\maketitle

\begin{abstract}
Large Language Models (LLMs) have demonstrated remarkable performance in real-world applications. However, adapting LLMs to novel tasks via fine-tuning often requires substantial training data and computational resources that are impractical in few-shot scenarios. Existing approaches, such as in-context learning and Parameter-Efficient Fine-Tuning (PEFT), face key limitations: in-context learning introduces additional inference computational overhead with limited performance gains, while PEFT models are prone to overfitting on the few demonstration examples. In this work, we reinterpret the forward pass of LLMs as an optimization process, a sequence of preconditioned gradient descent steps refining internal representations. Based on this connection, we propose Optimization-Inspired Few-Shot Adaptation (OFA), integrating a parameterization that learns preconditioners without introducing additional trainable parameters, and an objective that improves optimization efficiency by learning preconditioners based on a convergence bound, while simultaneously steering the optimization path toward the flat local minimum. Our method overcomes both issues of ICL-based and PEFT-based methods, and demonstrates superior performance over the existing methods on a variety of few-shot adaptation tasks in experiments.
\end{abstract}

\section{Introduction}
The compelling performance of Large Language Model (LLM) has been demonstrated in real-world applications such as code generation~\citep{chen2021evaluating, lu2024codet,madaan2023selfrefine}, scientific reasoning~\citep{wei2022chain, cobbe2021training}, healthcare~\citep{singhal2022large}, and robotics~\citep{brohan2023rt2, reed2023robocat}. This phenomenon can be attributed to the adaptation of pretrained base models toward the target tasks. 
Full parameter fine-tuning as a straightforward method requires tremendous computational resources and training data, which is usually not practical. Parameter-Efficient Fine-Tuning (PEFT) ~\citep{lora,corda,pissa,NEURIPS2024_d4387c37} methods aim to reduce these expensive costs by partially tuning the parameters, while these algorithms still require a relatively large amount of high-quality training data. Especially, when only a few data samples are given for adaptation to new tasks, they suffer from the overfitting problem and fail to learn generalizable adapters~\citep{liu2022few}.

To enable adaptation with few-shot data on new tasks, in-context learning (ICL)~\citep{radford2019language,brown2020language} offers an alternative approach by leveraging prompt engineering techniques. It stores a small set of demonstration examples in a buffer and modifies the forward pass to enable LLMs to generate answers for new queries. While ICL reduces data cost and mitigates the overfitting problem of parameter-efficient fine-tuning (PEFT),  it still faces several significant challenges. For instance, the stored demonstration samples introduce additional computational burdens, slowing the inference process. Besides, the improvement of the model on the target domain is highly constrained,  since limited or even no learnable parameters are used for adaptation, resulting in the incapability of ICL algorithms to absorb the entire knowledge presented in the data and generalize to unseen data. When the demonstration examples exceed a certain threshold, the model's performance is usually saturated~\citep{i2cl, liu2022few}. In addition, the prompt format has an unpredictable impact on the ICL's performance~\citep{webson2022prompt,zhao2021calibrate}, and the existing mechanism designs are usually intuitive without theoretical support, leading to unexplainable failures. In this work, we address the following question:

\emph{For few-shot adaptation, how can we develop an efficient method that avoids overfitting to few-shot data, as commonly observed in PEFT, while also overcoming ICL's lack of learnable parameters and extra inference cost?}

Existing works~\citep{von2023transformers,dai2022can,akyrek2023what,preconditioning_icl,bai2023transformers,wu2024how,zhang2024context} have demonstrated that the forward pass of an LLM for few-shot adaptation can be deemed as an optimization process with a sequence of gradient descent (GD) steps. However, these GD steps usually ignore the task-specific preconditioning matrices. As a result, this optimization process is not controllable, leading to sub-optimal adaptation performance. To this end, we first extend this process as preconditioned gradient descent (PGD), where the LayerNorm layers are integrated as learnable preconditioning matrices, which not only introduce learnable parameters but also enable the control of the few-shot adaptation process to avoid overfitting.

Thanks to our learnable preconditioners, we then propose to steer the optimization trajectory toward task-specific solutions by enhancing two key properties: optimization efficiency and generalization ability. Since the number of optimization steps is tied to the number of attention layers, we first introduce an objective that promotes smoother optimization paths by minimizing local contrast, which implicitly tightens convergence bounds and improves optimization efficiency. To enhance generalization ability, we further propose an additional objective term that encourages convergence to flat regions of the loss landscape by minimizing the local sharpness. However, directly computing the sharpness is intractable. Our method estimates sharpness indirectly by minimizing the trace of the preconditioned Hessian at each step using the Hutchinson approximation~\citep{agarwal2017second}. As a result, unlike prior sharpness estimation approaches~\citep{sam,gsam,asam}, often incurring significant computational overhead, our approximation makes it more scalable and LLM-compatible. 

In summary, we introduce a novel optimization-inspired framework for few-shot adaptation, \ouracronym{}, which improves both optimization efficiency and generalization ability for few-shot adaptation by steering the internal optimization via learnable preconditioners. It provides a new technical solution to this task, avoiding both issues of PEFT requiring expensive computational resources and adaptation datasets, and ICL relying on unstable prompt engineering techniques and extra inference cost. Extensive experiments across various datasets and LLM architectures demonstrate the superior performance of \ouracronym{} over existing baselines. The contributions are listed as follows: 
\begin{itemize}
    \item We propose \ourModel{} (\ouracronym{}), which frames the few-shot adaptation task as the learning of iteration-wise preconditioning matrices within the internal LLM optimization process, overcoming both issues of ICL-based and PEFT-based methods.  
    \item We design the learning objectives to learn these internal optimization preconditioning matrices for enhancing the optimization efficiency and generalization ability while analyzing their contribution to the convergence speed and generalization bound theoretically.
    \item The proposed algorithm demonstrates superior performance among all the baseline models, including both ICL-based and PEFT, mainly LoRA-based, methods. Notably, \ouracronym{} can achieve improvements of 4\% - 10\% with Llama2-7B and Llama3-8B-Instruct on all the challenging benchmarks compared with the SOTA method, I2CL~\cite{i2cl}. 
\end{itemize}

\section{Related Work}
\textbf{Transformer implements gradient descent.} The recent works demonstrate that the pre-trained transformers, Large Language Models, can implement optimization algorithms such as gradient descent, with each attention layer corresponding to one optimization iteration~\citep{ von2023transformers,dai2022can,akyrek2023what,preconditioning_icl,bai2023transformers,wu2024how,zhang2024context}. Without changing the parameters, LLMs can adapt to novel tasks with only a few demonstration examples through implicitly conducted optimization algorithms with similar behavior of multiple step gradient descent. This phenomenon has also been empirically observed in~\citep{dai2022can,von2023transformers}. Based on this, one line of study~\citep{li2024context} modifies the forward pass mechanism to improve the few-shot adaptation performance. Then the later research work explores the underlying property from a variety of perspectives, including the initialization, the demonstration sample efficiency~\citep{sampling_icl}, and complicated minmax optimization~\citep{minimax_icl}. \citet{preconditioning_icl} further claims that the preconditioned gradient descent algorithm can be learned on the random samples, whose preconditioning matrices vary according to the input feature distribution of the layer. Based on these studies, we aim to improve the optimization efficiency from the convergence speed and generalization perspective under the constraint that only a fixed number of certain optimization steps are accessible. 

\textbf{Efficient model adaptation.} The pretrained models are expected to capture transferable knowledge for the benefit of novel task training efficiency on the computational resource and data samples. One line of research focuses on adapting models to the target tasks when a few samples are available~\citep{maml, reptile, flamingo, sung2018learning,imaml,prototypical}. To achieve this, few-shot learners~\citep{maml, reptile} learns a set of transferable parameter initialization on the related tasks, thus with the limited number of training samples and adaptation steps, the model can converge to optimums. The following research works further extend this idea by developing advanced optimization geometry~\citep{meta-curvature}, learnable adaptation process components~\citep{meta-sgd}, and accurate gradient estimation~\citep{meta-warp}. Another lines of research explore a generalizable feature space to enable category separation by learning advanced metrics and the position of categories~\citep{prototypical,vinyals2016matching, allen2019infinite,bateni2020improved,yang2021free}. In the LLM era, adapting the pretrained model with low cost, namely the computational resources and the amount of data points, is in high demand. Parameter-efficient fine-tuning (PEFT) models reduce the adaptation spends by identifying the efficient tuning components, learning the row rank adapters~\citep{lora,dora} and their initialization~\citep{corda,pissa,NEURIPS2024_d4387c37}. Even though these methods reduce adaptation cost dramatically in comparison with full model adaptation, they still fail to generalize when only a few samples are allowed. To the best of our knowledge, ~\citet{few_peft} shares a similar motivation to narrow the gap between PEFT and few-shot adaptation with ours, however, their work focuses on the empirical tricks and introduces extra parameters and increases the computational burden in the inference stage. In this work, we utilize the LLM property, that the inference process can be theoretically interpreted as the optimization process under the In-context learning region, and design novel objective terms to enable the fast convergence and generalization. 

\section{Method}
In this section, we introduce the proposed method for adapting the model using a few demonstration samples. Building on our insight that the optimization path, implicitly defined by the forward pass of a large language model (LLM), can be steered by modifying layer-wise preconditioning matrices, we propose \ourModel{}. Our method is designed to address two key essential properties for effective adaptation: optimization efficiency and generalization ability. These are encouraged through two corresponding penalty objective terms.
\subsection{Optimization-inspired perspective for LLMs}
The pretrained LLMs implement gradient descent for the adaptation to the target domain when prompted with the demonstration samples~\citep{von2023transformers,dai2022can,akyrek2023what,preconditioning_icl,bai2023transformers}. More formally, with $n$ query-answer prompt pairs, denoted as $x \in \mathbb{R}^{d}$ and $y\in \mathbb{R}$, the LLM model yields an answer $\hat{y}^{(n+1)}$ regarding the novel query $x^{(n+1)}$. We simplify the notations with matrix format by denoting $Z_i$ as the output from the $i$-th layer, while $Z_0$ is framed as the raw input data:  
\begin{equation}
Z_0 = \begin{bmatrix} z^{(1)} & z^{(2)} & \dots & z^{(n)} & z^{(n+1)} \end{bmatrix} = \begin{bmatrix} 
x^{(1)} & x^{(2)} & \dots & x^{(n)} & x^{(n+1)} \\ 
y^{(1)} & y^{(2)} & \dots & y^{(n)} & 0 
\end{bmatrix} \in \mathbb{R}^{(d+1) \times (n+1)}.
\end{equation}
where $d$ and $n$ denote the input dimension and number of demonstration examples, respectively, and $0$ represents the replaceable unknown variable corresponding to $x^{(n+1)}$. It has been theoretically substantiated~\citep{preconditioning_icl, von2023transformers,zhang2024context} that the $t$-th attention layer of a transformer-based LLM, $F(\cdot) = f$, implements an iteration of gradient descent:
\begin{align}
&Z_{t+1} = Z_{t} - \eta P_t \nabla \mathcal{L}(Z_t) \label{eq:preconditioning_update} \\
&\textbf{s.t.}\,\, f_t(Z_t) = - \eta P_t \nabla \mathcal{L}(Z_t) = \text{Attn}(Z_t), \nonumber
\end{align}
with the objective defined by
\begin{equation*}
\mathcal{L} = \| F(Z_0)_{[d+1, n+1]} - y^* \|^2_2,
\end{equation*}
where $\eta$ represents the learning rate and $P_t = I$ is an identical matrix which does not modify the update information, $\eta P_t \nabla \mathcal{L}(Z_t)$, implemented by an attention layer, $f_t(\cdot)$. As the ideal preconditioning matrix depends on the input data distribution~\citep{preconditioning_icl}, in this work, we learn the layer (iteration) wise preconditioning matrix, characterizing the task-specific optimization path.

\subsection{Parameterization for Learnable Preconditioning Matrix}
Building on the theoretical insight that an attention layer can be interpreted as a gradient descent (GD) step, we integrate learnable preconditioning matrices via LayerNorm, an often overlooked component in prior analytical works~\citep{furuya2024transformers, preconditioning_icl}. Owing to its small parameter size and strategic position within the Transformer architecture, LayerNorm serves as a lightweight and tuning-efficient parameterization of the preconditioners for preconditioned GD. Specifically, in modern LLMs such as Llama~\citep{llama2,llama3} and GPT-2~\citep{gpt2}, each LayerNorm layer is parameterized by a single vector, resulting in fewer parameters than even a rank-1 LoRA model. These layers are typically placed after attention blocks and normalize the output of those blocks.:
\begin{align*}
Z_{t+1} = Z_t - \Gamma_t \cdot \frac{\nabla \mathcal{L}(Z_t) - \mu_t}{\sigma_t}, \,\,\, \Gamma_t = \text{diag}(\gamma_t),
\end{align*}
where $\mu_t$ and $\sigma$ are the mean and standard deviation of $\nabla \mathcal{L}(Z_t)$, and $\Gamma_t = \text{diag}(\gamma_t)$ represent the learnable diagonal matrix in the LayerNorm. Then the learnable preconditioning matrix in this optimization process is characterized as: 
\begin{align*} 
Z_{t+1} = Z_t - P_t \nabla \mathcal{L}(Z_t), \, \,\,\, P_t = \Gamma_t \cdot \frac{1}{\sigma_t}.
\end{align*}

\subsection{Learning for Fast Convergence}
By framing the forward pass of the transformer, fed with the prompt and query, the model gradually predicts the answer through an iterative optimization of the representation through the attention blocks. However, due to the architecture-specific constraints of LLMs, such as the fixed number of layers, it remains unclear whether the efficiency of this process is guaranteed or whether the process truly converges to an optimal solution.

To address these issues, we enhance optimization efficiency and stability by introducing a smoothing mechanism that mitigates the risk of gradient explosion and oscillation. Specifically, we refine the step ratios defined by:
\begin{align*}
    \|Z_{t+1} - Z^* \| \leq \rho_t \|Z_t -Z^* \|, \,\, \rho_t < 1, 
\end{align*}
where $\rho_t$ works as a proxy reflecting the stability of the optimization process. Then, a new objective is proposed to equip this property for the few-shot adaptation by updating all the layer-wise preconditioning matrices, $P = \{ P_t\}^T_{t = 1}$: 
\begin{align}
    \mathcal{J}(P) = \sum^{T-1}_{t = 1} \frac{\|Z_t - Z_{t+1}\|}{\|Z_t - Z_{t-1} \|}, \label{eq:step_ratio_loss}
\end{align}
where we denotes the $l$2-norm by $\|\cdot \|$ through out the paper. One may notice that by decomposing the sum over all the layers, each term, $\frac{\|Z_t - Z_{t+1}\|}{\|Z_t - Z_{t-1} \|}$, increases the penalty strength when the numerator is larger than the denominator: $\|Z_t - Z_{t+1}\| > \|Z_t - Z_{t-1} \|$ as when $\|Z_t - Z_{t+1}\| \gg \|Z_t - Z_{t-1} \|$ indicates exploding or oscillating steps, suggesting poor conditioning or overshooting; When overminimizing the numerator in $\frac{\|Z_t - Z_{t+1}\|}{\|Z_t - Z_{t-1} \|}$ will be regulated by the denominator in $\frac{\|Z_{t+1} - Z_{t+2}\|}{\|Z_{t+1} - Z_{t} \|}$ and $\|Z_t - Z_{t+1}\| \ll \|Z_t - Z_{t-1} \|$ represents contraction, an indicator of convergence. Beyond enhancing the step-wise optimization quality, Eq.~\ref{eq:step_ratio_loss} also plays a crucial role in accelerating convergence, which we substantiate through analysis: 
\begin{restatable}{theorem}{stepratioencourages}  
\label{corollary:step_ratio}
Let $f: \mathbb{R}^d \rightarrow \mathbb{R}$ be a twice continuously differentiable function with locally Lipschitz gradients. Suppose the update rule is given by: 
\begin{align*}
Z_{t+1} = Z_{t} - P_t \nabla \mathcal{L}(Z_t),
\end{align*}
where each $P_t \in \mathbb{R}^d \times \mathbb{R}^d $ is a learnable preconditioning matrix. Define the step-ratio objective in Eq.~\ref{eq:step_ratio_loss}\cut{: 
\begin{align*}
    \mathcal{J}(P) = \sum^{T-1}_{t = 1} \frac{\|Z_t - Z_{t+1}\|}{\|Z_t - Z_{t-1} \|}.
\end{align*}
}
Under the assumption that $f$ admits a local second-order Taylor expansion approximation at each step, then minimizing $\mathcal{J}(P)$ encourages the learned preconditioners $P_t$ to induce local operators $I- \eta P_t H_t$ with $H_t = \nabla^2 f(Z_t)$ with smaller spectral radius. 
\begin{align*}
    \|Z_{t+1} - Z^* \| \leq \rho_t \|Z_t -Z^* \|, \,\, \rho_t < \rho_{t-1}.
\end{align*}
Thus, it induces faster local contraction and improved convergence.
\end{restatable}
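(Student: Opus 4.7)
The plan is to first linearize the dynamics near the optimum using the hypothesized second-order Taylor expansion, and then rewrite the step-ratio terms of $\mathcal{J}(P)$ in terms of the operator $M_t := I - \eta P_t H_t$ whose spectral radius governs contraction. Since $\nabla f(Z^*)=0$ and $f$ is twice continuously differentiable with locally Lipschitz gradient, a Taylor expansion around $Z^*$ yields $\nabla f(Z_t) = H_t (Z_t - Z^*) + o(\|Z_t-Z^*\|)$. Substituting into the update rule gives
\begin{equation*}
Z_{t+1} - Z^* = (I - \eta P_t H_t)(Z_t - Z^*) + o(\|Z_t - Z^*\|) = M_t (Z_t - Z^*) + o(\|Z_t-Z^*\|),
\end{equation*}
which is the standard one-step contraction identity; to leading order $\|Z_{t+1}-Z^*\|\le \rho_t\|Z_t-Z^*\|$ with $\rho_t=\|M_t\|$, the spectral radius of $M_t$.

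Next I would translate the consecutive-iterate differences appearing in $\mathcal{J}(P)$ into statements about $M_t$. Using $Z_{t+1}-Z_t = -\eta P_t \nabla f(Z_t) = (M_t - I)(Z_t - Z^*)$ and $Z_t - Z_{t-1} = (M_{t-1}-I)(Z_{t-1}-Z^*)$, together with $Z_t-Z^*=M_{t-1}(Z_{t-1}-Z^*)$, each term of $\mathcal{J}$ becomes
\begin{equation*}
\frac{\|Z_{t+1}-Z_t\|}{\|Z_t - Z_{t-1}\|} = \frac{\|(M_t-I)M_{t-1}(Z_{t-1}-Z^*)\|}{\|(M_{t-1}-I)(Z_{t-1}-Z^*)\|} + o(1).
\end{equation*}
On the spectral decomposition of $M_{t-1}$ (assumed diagonalizable since $H_{t-1}$ is symmetric and $P_{t-1}$ can be taken symmetric positive definite locally), the right-hand side is controlled by $\|M_{t-1}\|$ up to a multiplicative factor depending on $\|M_t-I\|/\|M_{t-1}-I\|$, which is bounded for admissible preconditioners (learning rate well chosen so that $\|M_t\|<1$). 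I would then argue that minimizing $\mathcal{J}(P)$ forces each $\|M_{t-1}\|$, i.e.\ each $\rho_{t-1}$, to decrease, and in particular that a sequence of preconditioners $\{P_t\}$ achieving lower objective must satisfy $\rho_t<\rho_{t-1}$ along the optimization path (since otherwise increasing a later ratio could be offset only at the cost of a global increase in $\mathcal{J}$).

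Finally, I would combine the per-step contraction $\|Z_{t+1}-Z^*\|\le\rho_t\|Z_t-Z^*\|$ with the induced monotone decrease $\rho_t<\rho_{t-1}$ to conclude the stated faster local contraction and improved convergence. The main obstacle I expect is bridging the step-ratio quantity, which is defined purely in terms of iterates and not directly in terms of $M_t$, to the spectral radius $\|M_t\|$; this identification is only exact when the iterate differences lie along dominant eigendirections of $M_t$, so in general I would need either a worst-case bound via the operator norm or an alignment argument showing that the dominant component of $Z_t - Z^*$ becomes aligned with the top eigenvector of $M_t$ as $t$ grows (a standard power-iteration style argument under the Taylor approximation). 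The strict inequality $\rho_t<\rho_{t-1}$ is also delicate: it holds generically because the objective strictly penalizes non-decreasing ratios, but I would need to note that it can degenerate to equality in symmetric situations, and state the conclusion as an encouragement toward monotone decrease rather than a guarantee.
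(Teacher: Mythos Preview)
Your approach is essentially the same as the paper's: linearize the update via the second-order Taylor approximation $\nabla f(Z_t)\approx H_t(Z_t-Z^*)$, derive $Z_{t+1}-Z^*=(I-\eta P_tH_t)(Z_t-Z^*)$, rewrite the step-ratio in terms of $M_t=I-\eta P_tH_t$, and argue that minimizing $\mathcal{J}(P)$ pushes the spectral radii $\rho_t$ down. The paper's proof is in fact sketchier than yours---it substitutes only the numerator $\|Z_{t+1}-Z_t\|=\|\eta P_tH_t(Z_t-Z^*)\|$ and then simply asserts that minimizing $\mathcal{J}$ ``ensures $\rho_t$ decreases over time,'' without the additional algebra you carry out (relating $Z_t-Z^*$ back to $M_{t-1}(Z_{t-1}-Z^*)$) or the eigendirection-alignment discussion. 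The obstacles you flag---that the step-ratio only proxies the spectral radius when iterates align with dominant eigenvectors, and that $\rho_t<\rho_{t-1}$ is an encouragement rather than a strict guarantee---are genuine and are not addressed in the paper's argument; your caveats are appropriate and your proposal is, if anything, more complete than the published proof.
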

The step-ratio objective $\mathcal{J}(P)$ serves as a differentiable proxy that captures the stability and efficiency of this optimization process. Smaller step ratios imply smoother convergence and discourage overshooting or oscillation. By optimizing $\mathcal{J}(P)$ over preconditioner parameters, we shape the feedforward dynamics to mimic efficient optimization, inducing faster adaptation and better generalization in downstream tasks.

\subsection{Learning for Flat Region Convergence}
The effectiveness of the flat local minimum for the model's generalization ability has been theoretically and empirically explored. Motivated by this, we aim to enable the preconditioning matrix to be used for flatness-seeking ability by minimizing the sharpness of the loss landscape during the optimization process. However, the existing method for sharpness estimation~\citep {sam, asam, zhang2022flatness} developed for the optimization process requires the explicit expression, while in our setting, such information is not accessible due to the black box characterization of the update information. In addition, those methods do not consider the effect of the local sharpness approximation from the preconditioning matrix. To handle this, we estimate sharpness for the layer-wise preconditioning GD optimization by the preconditioning Hessian trace:
\begin{align*}
    \mathcal{H}_P = tr(P_t\nabla^2 \mathcal{L}(Z_t) P^T_t ). 
\end{align*}
However, directly computing this trace is infeasible due to the implicitly defined optimization process, including the loss function and the gradients. Instead, we utilize a numerical method, the Hutchinson approximation~\citep{agarwal2017second}: 
\begin{align}
    tr(P_t\nabla^2 \mathcal{L}(Z_t) P^T_t ) &\approx \frac{1}{\epsilon} \mathbb{E}_{\nu}\bigg[ \nu^{T} P_t (\nabla \mathcal{L}(Z_t + \epsilon P_t \nu) - \nabla \mathcal{L}(Z_t))\bigg] \nonumber \\
    & \approx \frac{1}{\epsilon} \frac{1}{N} \sum_i \bigg[ \nu^{T}_i P_t (\nabla \mathcal{L}(Z_t + \epsilon P_t \nu_i) - \nabla \mathcal{L}(Z_t))\bigg], \label{hessian_approx}
\end{align}
where $\nu\sim\mathcal{N}(0, I)$ is a small perturbation sampled layer-wisely, and $tr(\cdot)$ represents the operator for trace calculation, $\epsilon$ denotes a small scale number. Note that in the non-convex optimization setting, $tr(P_t\nabla^2 \mathcal{L}(Z_t) P^T_t)$ can be negative. This may destabilise the training due to the numerical issue in the minimization process. To mitigate this issue and maintain the valuable information contained in the negative values, we regularize this term by adding a Softplus\citep{softplus} activation function, $\delta(\cdot)$, to stabilize the numerical optimization while retaining the information brought by the negative trace. We provide the implementation details in Algorithm~\ref{sharpness_estimation}. We also analyse the connection between the flatness of the layer-wise preconditioning matrix and the generalization to understand the reason for the enhanced generalization ability. 
\begin{restatable}{theorem}{stepwisegeneralization}  
\label{theorem:stepwisegeneralization}
Let $Z_T$ be the final parameters after $T$ steps of optimization, with preconditioning update rules in Eq.~\ref{eq:preconditioning_update} and denoting $\nabla^2\mathcal{L}_{train}(Z_t)$ as the Hessian at step t with $\| P_t \nabla^2\mathcal{L}_{train}(Z_t) \|_F$ measuring the curvature after preconditioning at that step. Assume the loss is smooth, $ \| \nabla^2 \mathcal{L}(Z_t) \|_F \leq \mu$, and the gradient is bounded, $\| \nabla \mathcal{L}(Z_t)\| \leq G$, the generalization gap satisfies: 
\begin{align*}
   \mathbb{E}[\mathcal{L}_{test}(Z_T) - \mathcal{L}_{train}(Z_T) ] \leq \mathcal{O} \bigg( \sqrt{\frac{1}{n} \sum^T_{t = 1} \| P_t \nabla^2 \mathcal{L}_{train} (Z_t)\|^2_F}\bigg).
\end{align*}
\cut{
\begin{align*}
   Z_{t+1} = Z_{t} - \eta P_t \nabla \mathcal{L}_{train}(Z_t)
\end{align*}
}
\end{restatable}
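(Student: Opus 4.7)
The plan is to obtain the bound via a PAC--Bayes analysis in which the posterior is a Gaussian centered at $Z_T$ whose covariance absorbs the trajectory-dependent curvature information. First, I would invoke the standard McAllester PAC--Bayes inequality: for any data-independent prior $\pi$ and any posterior $\rho$ one has $\mathbb{E}_{Z\sim\rho}[\mathcal{L}_{test}(Z) - \mathcal{L}_{train}(Z)] \leq \mathcal{O}\bigl(\sqrt{(KL(\rho\|\pi) + \log(1/\delta))/n}\bigr)$. The problem then reduces to picking $\rho$ so that (a) $\mathbb{E}_\rho[\mathcal{L}(Z)]$ remains close to $\mathcal{L}(Z_T)$, and (b) the KL term picks up exactly the preconditioned Hessian Frobenius norms along the trajectory.

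Second, I would introduce a stochastic surrogate of the update by injecting Gaussian perturbations $\xi_t\sim\mathcal{N}(0,\sigma_t^2 I)$ at each step, $\tilde Z_{t+1} = \tilde Z_t - \eta P_t \nabla\mathcal{L}(\tilde Z_t) + \xi_t$. Under the smoothness assumption $\|\nabla^2\mathcal{L}\|_F\leq\mu$, a first-order linearization of each update around the clean trajectory makes $\tilde Z_T$ approximately Gaussian with mean $Z_T$ and covariance $\Sigma_T = \sum_{t=1}^{T} J_{t\to T}\,\sigma_t^2 I\,J_{t\to T}^\top$, where $J_{t\to T} = \prod_{s=t}^{T-1}(I - \eta P_s \nabla^2\mathcal{L}(Z_s))$ is the trajectory Jacobian. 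The bounded-gradient condition $\|\nabla\mathcal{L}(Z_t)\|\leq G$ is what allows me to control the residual terms in this linearization.

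Third, I would Taylor-expand the expected training loss under this posterior: $\mathbb{E}_\rho[\mathcal{L}_{train}(Z)] \approx \mathcal{L}_{train}(Z_T) + \tfrac{1}{2}\mathrm{tr}\bigl(\Sigma_T\nabla^2\mathcal{L}_{train}(Z_T)\bigr)$, and then rewrite the curvature contribution of step $t$ by pushing $\nabla^2\mathcal{L}_{train}(Z_T)$ back along the Jacobian chain. After applying the smoothness bound to control the Jacobians by a universal constant, the per-step contribution reduces, up to constants, to $\sigma_t^2\,\|P_t\nabla^2\mathcal{L}_{train}(Z_t)\|_F^2$. Meanwhile, $KL(\rho\|\pi)$ against an isotropic prior scales as $\sum_t \|Z_t\|^2/\sigma_t^2$ plus standard $d\log(1/\sigma_t)$ terms. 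Optimizing the noise levels $\sigma_t$ to balance expected loss perturbation against the KL, and then applying Cauchy--Schwarz, collapses the KL into a quantity proportional to $\sum_{t=1}^{T}\|P_t\nabla^2\mathcal{L}_{train}(Z_t)\|_F^2$. Plugging back into the PAC--Bayes inequality and taking the square root yields the stated bound.

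The principal obstacle is the trajectory unrolling step: the Jacobian products $J_{t\to T}$ couple the step-$t$ curvature to every subsequent step, and a naive product bound would scale exponentially in $T$. I would defuse this either (i) by assuming a sufficiently small $\eta$ so that $\|I - \eta P_s\nabla^2\mathcal{L}(Z_s)\| \leq 1 + \mathcal{O}(\eta\mu)$ and the product of $T$ such factors is uniformly $\mathcal{O}(1)$, or (ii) by repeatedly applying Cauchy--Schwarz together with the smoothness bound $\mu$ to convert the product structure into an additive one. Either route reduces the nested trajectory dependence to the clean sum $\sum_{t=1}^{T}\|P_t\nabla^2\mathcal{L}_{train}(Z_t)\|_F^2$ inside the square root, yielding the $\mathcal{O}(1/\sqrt{n})$ generalization gap with the trajectory-aware preconditioned sharpness as the controlling quantity.
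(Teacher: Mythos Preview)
Your proposal shares the paper's high-level scaffold---PAC--Bayes plus a second-order Taylor expansion of the loss under a Gaussian perturbation centered at $Z_T$---but the two diverge sharply in how the perturbation covariance is tied to the trajectory. You build the posterior by injecting noise $\xi_t$ at every step and linearizing the dynamics, which forces you to carry the Jacobian products $J_{t\to T}=\prod_{s=t}^{T-1}(I-\eta P_s\nabla^2\mathcal{L}(Z_s))$ and then argue they are $\mathcal{O}(1)$. The paper never unrolls: it simply \emph{asserts} (citing prior sharpness/PAC--Bayes work) that the effective covariance $\Sigma$ is shaped by the step increments $\Delta_t=-\eta P_t\nabla\mathcal{L}(Z_t)$, and then bounds the per-step quadratic form directly,
\[
\bigl|\Delta_t^\top \nabla^2\mathcal{L}_{train}(Z_t)\,\Delta_t\bigr|
= \eta^2\,\nabla\mathcal{L}(Z_t)^\top P_t\,\nabla^2\mathcal{L}_{train}(Z_t)\,P_t\,\nabla\mathcal{L}(Z_t)
\le \eta^2 G^2\,\|P_t\nabla^2\mathcal{L}_{train}(Z_t)\|_F^2,
\]
using only the gradient bound $\|\nabla\mathcal{L}\|\le G$. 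Summing over $t$ and invoking a PAC--Bayes/Rademacher bound by citation gives the result in one line.

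What each buys: the paper's argument is short and avoids the Jacobian-product bottleneck entirely, but it is also considerably more heuristic---the step ``$\Sigma$ is shaped by $\{\Delta_t\}$'' is not made precise, and the curvature is evaluated at $Z_t$ rather than $Z_T$ without justification. Your construction is more honest about where the trajectory dependence enters, at the cost of the exponential-in-$T$ issue you flag; your proposed fixes (small $\eta$ or repeated Cauchy--Schwarz) are reasonable but would introduce extra constants that the paper's $\mathcal{O}(\cdot)$ simply absorbs. In short, your route is a more rigorous instantiation of the same idea; the paper takes the shortcut of bounding local curvature step-by-step and deferring the PAC--Bayes machinery to references.
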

More intuitively, seeking the right preconditioning matrix at each step helps the optimizer follow the low-curvature valleys of the loss landscape, leading to solutions that are not only low-loss but also robust to perturbations, which is beneficial for generalization, and proof is given in Appx.~\ref{proof:stepwisegeneralization}.

Building on the two theoretical results, we introduce two penalty terms into the preconditioner learning objective to guide inference features toward faster convergence in flatter regions of the loss landscape as:
\begin{align} \label{eq:main_objective}
    \Psi(P) = l_{CE}(F(Z_0)) + \lambda_1 \sum^{T-1}_{t = 1} \frac{\|Z_t - Z_{t+1}\|}{\|Z_t - Z_{t-1} \|} + \lambda_2 \sum^{T-1}_{t = 1} \delta (tr(P_t\nabla^2 \mathcal{L}(Z_t) P^T_t ) ),
\end{align}
where $\lambda_1$ and $\lambda_2$ are the tunable hyperparameters, controlling the regularization strength for the convergence and local flatness, and $l_{CE}$ denotes CrossEntropy to guarantee the features, $Z_t$, are optimized towards the task-specific local minimum. 
\begin{algorithm}[t]
\small
\caption{Sharpness estimation in \ourModel{} }
\label{sharpness_estimation}
\begin{algorithmic}[1]
\STATE {\bfseries Input: } Input prompt: $Z_0$, Learnable preconditioners: $\{P_t\}_{t}$, Noise scale : $\epsilon$, and, Transformer: $\{f_t\}_t$
\STATE {\bfseries Output: } $\{tr(P_t\nabla^2 \mathcal{L}(Z_t) P^T_t)\}_t$ 
\STATE The first forward pass: set $t=0$
\WHILE{$t < T-1$}
    \STATE $Z_{t+1} = f(Z_t)$
    \STATE $P_t\nabla \mathcal{L}(Z_t) = Z_{t+1} - Z_{t}$
    \FOR{$i$ in range($N$)}
    \STATE $\nu_i\sim\mathcal{N}(0, I)$
    \STATE $\hat{Z}^{i}_{t+1} = f_t(Z_t + \epsilon P_t v)$
    \STATE $P_t\nabla \mathcal{L}(Z_t + \epsilon P_t \nu_i) = \hat{Z}^{i}_{t+1} - (Z_t + \epsilon P_t v)$
    \ENDFOR
    \STATE $tr(P_t\nabla^2 \mathcal{L}(Z_t) P^T_t) = Eq.~\ref{hessian_approx}$
    \STATE $t+=1$
\ENDWHILE
\end{algorithmic}
\end{algorithm}

\section{Experiments}
In this section, we demonstrate the generalization ability of the calibrated Large Language Models on various settings. We begin by briefing the configuration of the experiments, including the architecture, datasets, and baseline models. We then dive into the efficiency of the contribution of the improvement of each proposed learning objective component.  

\textbf{Tasks.} We follow the evaluation protocol utilised in \citep{i2cl}, and apply the same tasks to evaluate \ourModel{}, which includes sentiment analysis: SST-2~\citep{sst}, emotion classification: Emoc~\citep{emoc}, question classification: TREC~\citep{trec}, topic classification AGNews~\citep{agnews}, encompassing 5-way sentiment analysis: SST-5~\citep{sst}, movie review classification: MR~\citep{mr}, 14-way topic classification: DBPedia~\citep{dbpedia}, subjectivity status categorization: Subj~\citep{subj}, and the hate speech detection: HateSp18~\citep{hate_speech18}. All the datasets are downloaded from HuggingFace without further modification. 

\textbf{Baseline Algorithms.} To evaluate \ouracronym{}, we conduct comparisons with other methods sharing a similar motivation and are capable of consuming the demonstration samples along with the standard zero-shot and few-shot (ICL) baselines. We select the recent representative methods solving the tasks of interest from various directions to demonstrate the superior performance of \ouracronym{}. \textbf{Soft-prompt}~\citep{lester2021power} learns a small set of continuous vectors prepended to the input of data to guide the model's behavior to a specific task. \textbf{Label-anchor}~\citep{wang2023label} shares a similar idea, aiming to learn with Soft-prompt methods, whereas learning the class label in the embedding space for few-shot or zero-shot adaptation. \textbf{Task-vector}~\citep{hendel2023context} extracts the task representative vectors from the demonstration samples and injects them into the novel inner mechanism to steer the inference process, achieving the zero-shot complexity. \textbf{I2CL}~\citep{i2cl} a recent state-of-the-art task-vector based method utilizing the residual stream property to eliminate the model-specific layer selection process. \textbf{IA3}~\citep{liu2022few} handles the limited adaptation sample issue by reducing the trainable parameters while regularizing high probability on wrong predictions and accounting for the length of different answer choices. 

\textbf{Main Results.} We compare \ouracronym{} with baseline methods on four main decoder-only architectures: Llama2-7B, Llama3-8B, Llama3-8B-Instruct, and GPT2-XL. These architectures are selected for their suitable memory cost relative to our computational cost. We present the performance of \ouracronym{} on Llama2-7B, and Llama3-8B-Instruct in Table~\ref{tab:main_tabel}, in which we can notice that \ouracronym{} outperforms all the competitors across all the datasets with noticeable margins. Especially, on DBPedia and Subj, \ouracronym{} demonstrates dramatic improvements. In the context that an attention layer performs an optimization step, we can observe that by retaining the main gradient part intact, tuning the preconditioning matrices is sufficient to improve the optimization efficiency. We leave the results of other models in Appx.~\ref{few-shot_other_model} where a similar performance pattern can be observed. 

\begin{table*}[t]
  \caption{Comparison between \ouracronym{} and other baseline algorithms on Llama2-7B and Llama3-8B-Instruct. Mean accuracy and standard deviation across five random seeds are reported. \textbf{Best} results are highlighted in bold.}
  \label{tab:main_tabel}
  \centering
  \adjustbox{max width=\textwidth}{
  \begin{tabular}{l|ccccccccc}
     \toprule
     Dataset & SST‑2 & SST‑5 & TREC & AGNews &
     Subj & HateSp18 & DBPedia &EmoC & MR \\
     \midrule
     Method & \multicolumn{9}{c}{Llama2‑7B}\\
     \midrule
     Zero‑shot          & 83.00 & 27.00 & 50.00 & 70.20 & 51.40 & 54.20 & 72.00 & 41.80 & 73.60 \\
     Few‑shot (ICL)     & $94.44_{\pm1.44}$ & $41.72_{\pm3.68}$ & $77.32_{\pm4.41}$ & $85.68_{\pm2.00}$ & $52.56_{\pm3.09}$ & $70.24_{\pm5.80}$ & $96.64_{\pm0.48}$ & $75.48_{\pm1.63}$ & $93.24_{\pm0.50}$ \\
     Soft‑prompt        & $56.24_{\pm6.99}$ & $24.24_{\pm2.96}$ & $55.20_{\pm4.14}$ & $78.00_{\pm7.60}$ & $57.40_{\pm4.93}$ & $59.56_{\pm6.96}$ & $74.40_{\pm6.43}$ & $35.08_{\pm5.29}$ & $54.32_{\pm1.76}$ \\
     Label‑anchor       & $83.32_{\pm5.95}$ & $27.68_{\pm4.21}$ & $77.48_{\pm3.49}$ & $83.72_{\pm1.04}$ & $53.00_{\pm2.95}$ & $64.52_{\pm8.09}$ & $81.40_{\pm3.67}$ & $59.12_{\pm10.60}$ & $84.40_{\pm5.89}$ \\
     Task‑vector        & $81.44_{\pm4.73}$ & $25.96_{\pm0.59}$ & $65.68_{\pm1.93}$ & $79.68_{\pm4.07}$ & $58.56_{\pm4.91}$ & $67.68_{\pm3.70}$ & $89.48_{\pm2.58}$ & $44.64_{\pm3.53}$ & $82.32_{\pm5.37}$ \\
     IA3                & $93.28_{\pm2.29}$ & $46.08_{\pm2.11}$ & $84.40_{\pm5.99}$ & $87.04_{\pm1.97}$ & $71.92_{\pm8.08}$ & $72.44_{\pm2.59}$ & $94.68_{\pm1.09}$ & $64.32_{\pm1.95}$ & $88.80_{\pm2.28}$ \\
     I2CL               & $87.68_{\pm2.47}$ & $39.12_{\pm2.69}$ & $78.56_{\pm5.32}$ & $85.48_{\pm1.16}$ & $73.84_{\pm3.84}$ & $69.88_{\pm5.67}$ & $90.16_{\pm1.86}$ & $63.72_{\pm1.37}$ & $87.68_{\pm2.26}$ \\
     \midrule
     \textbf{\ouracronym{} (Ours)}     & $\mathbf{95.84}_{\pm0.41}$ & $\mathbf{50.36}_{\pm3.28}$ & $\mathbf{85.92}_{\pm1.90}$ & $\mathbf{89.00}_{\pm1.26}$ & $\mathbf{88.40}_{\pm4.76}$ & $\mathbf{83.04}_{\pm3.72}$ & $\mathbf{97.72}_{\pm0.52}$ & $\mathbf{76.60}_{\pm2.39}$ & $\mathbf{94.36}_{\pm1.13}$ \\
     \midrule
     & \multicolumn{9}{c}{Llama3‑8B‑Instruct}\\
     \midrule
     Zero‑shot          & 93.00 & 35.80 & 71.00 & 80.40 & 50.80 & 67.80 & 67.40 & 53.60 & 86.40 \\
     Few‑shot (ICL)     & $96.48_{\pm0.48}$ & $46.72_{\pm2.64}$ & $79.92_{\pm5.83}$ & $89.64_{\pm0.59}$ & $57.48_{\pm7.08}$ & $52.72_{\pm2.35}$ & $97.00_{\pm0.28}$ & $65.28_{\pm4.29}$ & $93.12_{\pm0.16}$ \\
     Soft‑prompt        & $84.68_{\pm7.71}$ & $38.40_{\pm5.68}$ & $75.68_{\pm8.17}$ & $84.96_{\pm3.80}$ & $73.28_{\pm5.41}$ & $62.72_{\pm5.54}$ & $82.88_{\pm6.45}$ & $55.32_{\pm9.74}$ & $75.76_{\pm7.71}$ \\
     Label‑anchor       & $93.36_{\pm2.39}$ & $40.54_{\pm5.44}$ & $78.28_{\pm4.07}$ & $84.64_{\pm1.61}$ & $54.16_{\pm2.25}$ & $69.48_{\pm5.43}$ & $87.48_{\pm3.04}$ & $59.36_{\pm2.48}$ & $88.20_{\pm3.69}$ \\
     Task‑vector        & $94.80_{\pm2.02}$ & $56.42_{\pm1.15}$ & $79.83_{\pm1.52}$ & $89.21_{\pm0.58}$ & $76.08_{\pm1.23}$ & $67.12_{\pm0.32}$ & $79.52_{\pm1.84}$ & $57.96_{\pm4.59}$ & $86.52_{\pm0.64}$ \\
     IA3                & $94.32_{\pm0.82}$ & $49.24_{\pm2.06}$ & $87.60_{\pm3.46}$ & $88.36_{\pm1.80}$ & $82.04_{\pm7.43}$ & $77.20_{\pm4.37}$ & $92.56_{\pm1.82}$ & $68.04_{\pm2.24}$ & $91.76_{\pm0.43}$ \\
     I2CL               & $90.84_{\pm0.98}$ & $48.96_{\pm2.48}$ & $79.60_{\pm6.22}$ & $88.96_{\pm2.03}$ & $81.48_{\pm4.68}$ & $65.88_{\pm3.61}$ & $91.20_{\pm2.03}$ & $64.32_{\pm2.05}$ & $88.88_{\pm0.61}$ \\
     \midrule
     \textbf{\ouracronym{} (Ours)}     & $\mathbf{97.08}_{\pm0.27}$ & $\mathbf{58.32}_{\pm2.74}$ & $\mathbf{89.06}_{\pm1.49}$ & $\mathbf{91.84}_{\pm0.61}$ & $\mathbf{92.64}_{\pm3.43}$ & $\mathbf{89.47}_{\pm0.47}$ & $\mathbf{97.92}_{\pm1.06}$ & $\mathbf{79.24}_{\pm4.87}$ & $\mathbf{94.56}_{\pm0.51}$ \\
     \bottomrule
  \end{tabular}}
\vspace{-1 em}
\end{table*}

\textbf{Ablation Study via Probe Analysis.} We study the per-layer feature quality generated by \ouracronym{} via probing. To do this, we collected the training datasets by generating per-layer features by feeding the few-shot adaptation sets to the (trained) model and attaching the corresponding labels, then a set of linear classifiers is trained to predict the objects based on those features. For a fair comparison, the same process, including dataset collection and model training, is repeated on the raw model to construct the baseline. The learned classifiers are employed to prediction the per-layer features yielded from the test data. To illustrate the effect of \ouracronym{}, we plot the layer-wise accuracy and loss in Figure~\ref{fig:probe_analysis}, from which one can observe that the model trained by \ouracronym{} consistently outperforms the baseline model under both metrics across various datasets. More importantly, from an optimization dynamic perspective, the loss learning curve generated by \ouracronym{} converges to a more stable region with the smallest fluctuation in comparison with other methods across different datasets, which indicates the flat convergence region. As preconditioning matrices steer the optimization path, directly comparing the steps for achieving the final loss could be unfair; however, we can still observe that \ouracronym{} reaches the same loss level with fewer steps in Figure~\ref{fig:probe_analysis}. Therefore, \ouracronym{} not only provides a flat minimum but also improves the optimization efficiency.  

\begin{figure}
    \centering
    \includegraphics[width=0.32\linewidth]{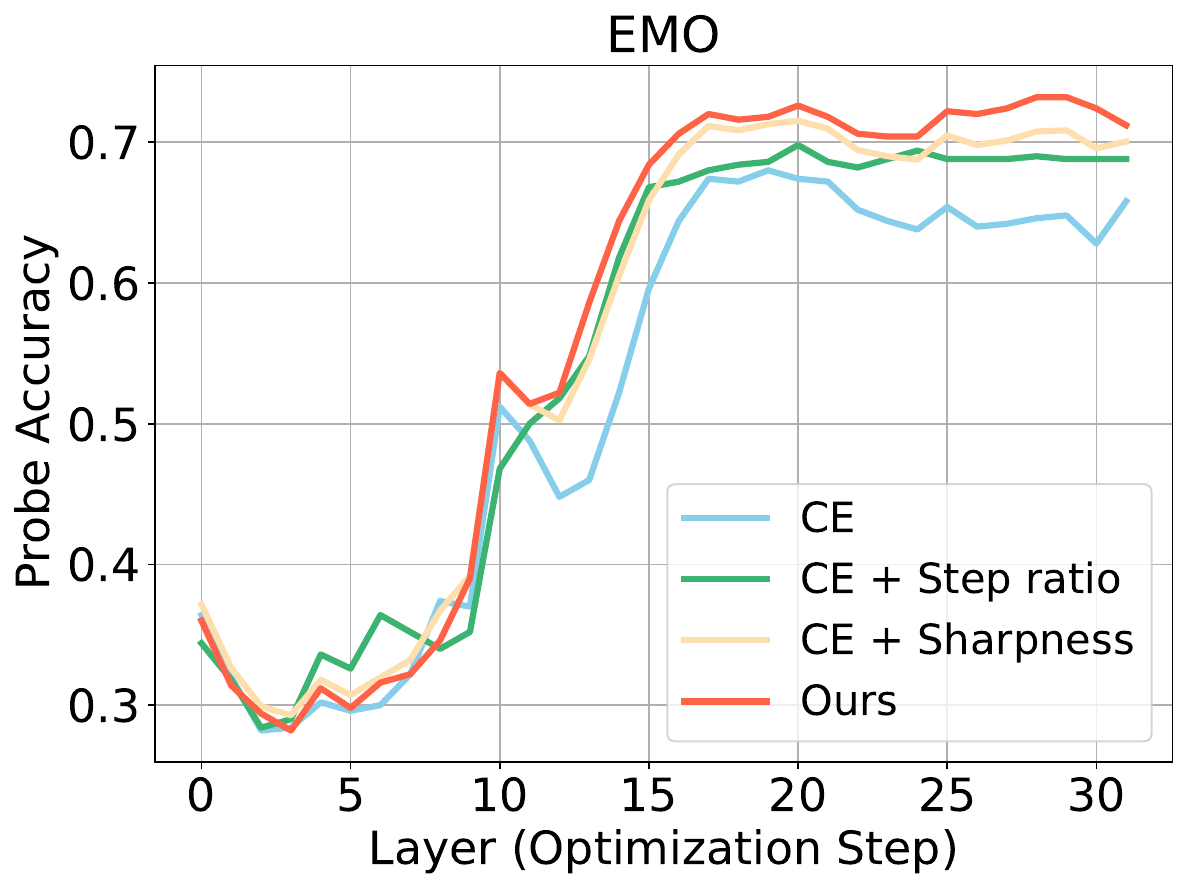}
    \includegraphics[width=0.32\linewidth]{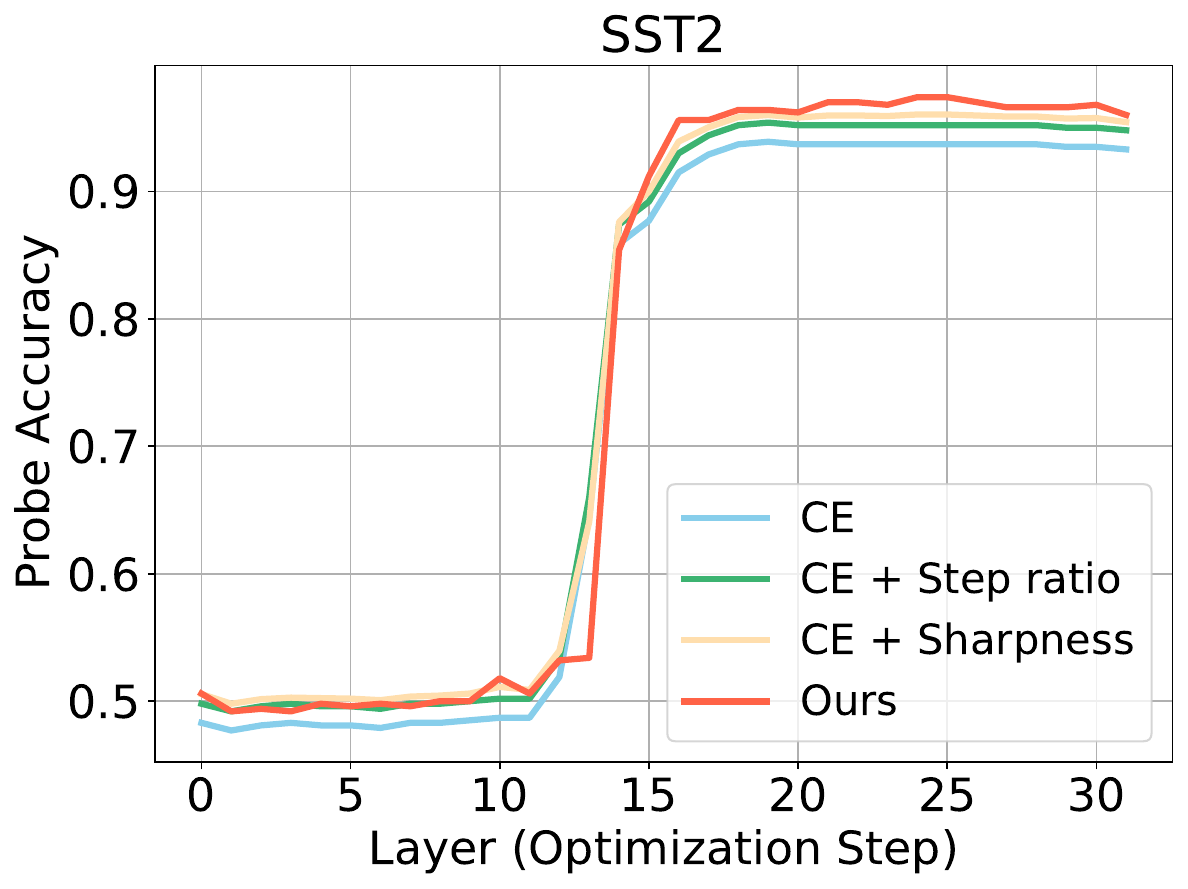}
    \includegraphics[width=0.32\linewidth]{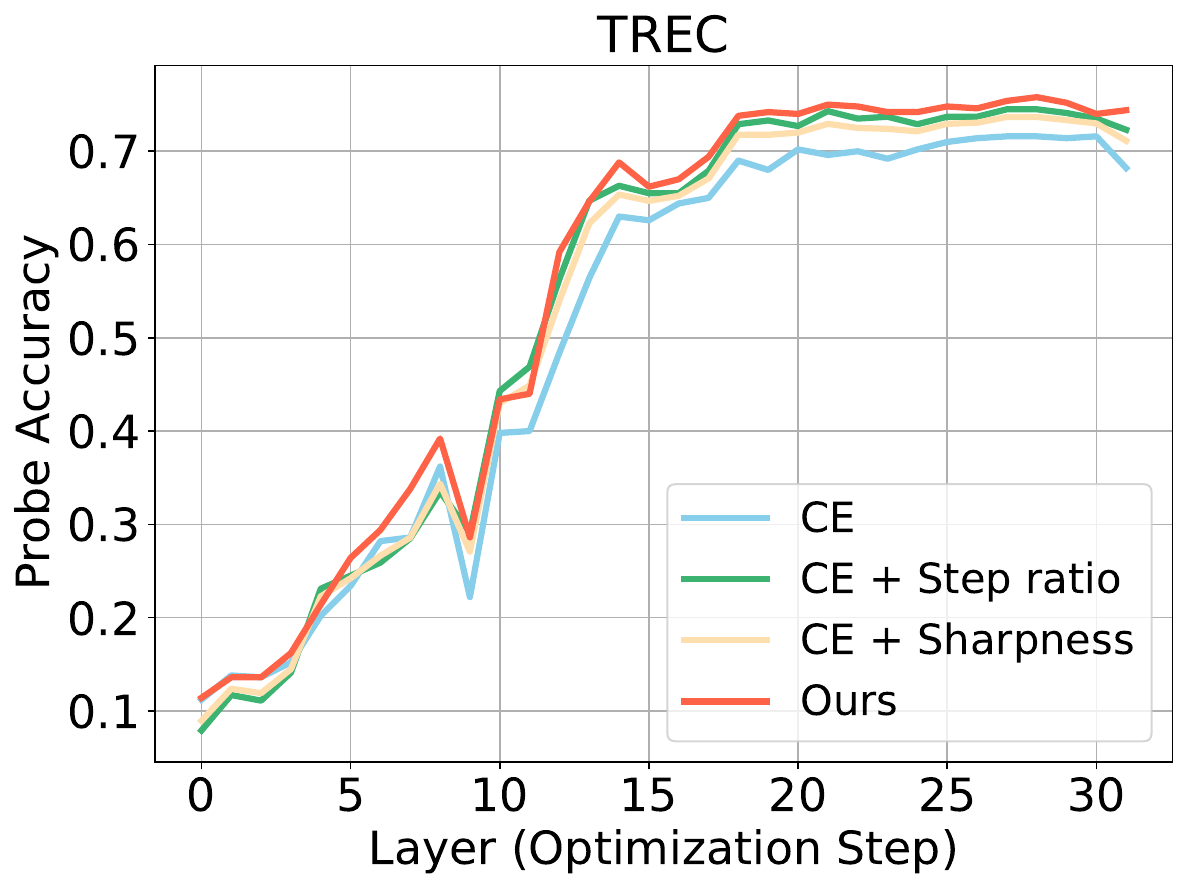}

    \includegraphics[width=0.32\linewidth]{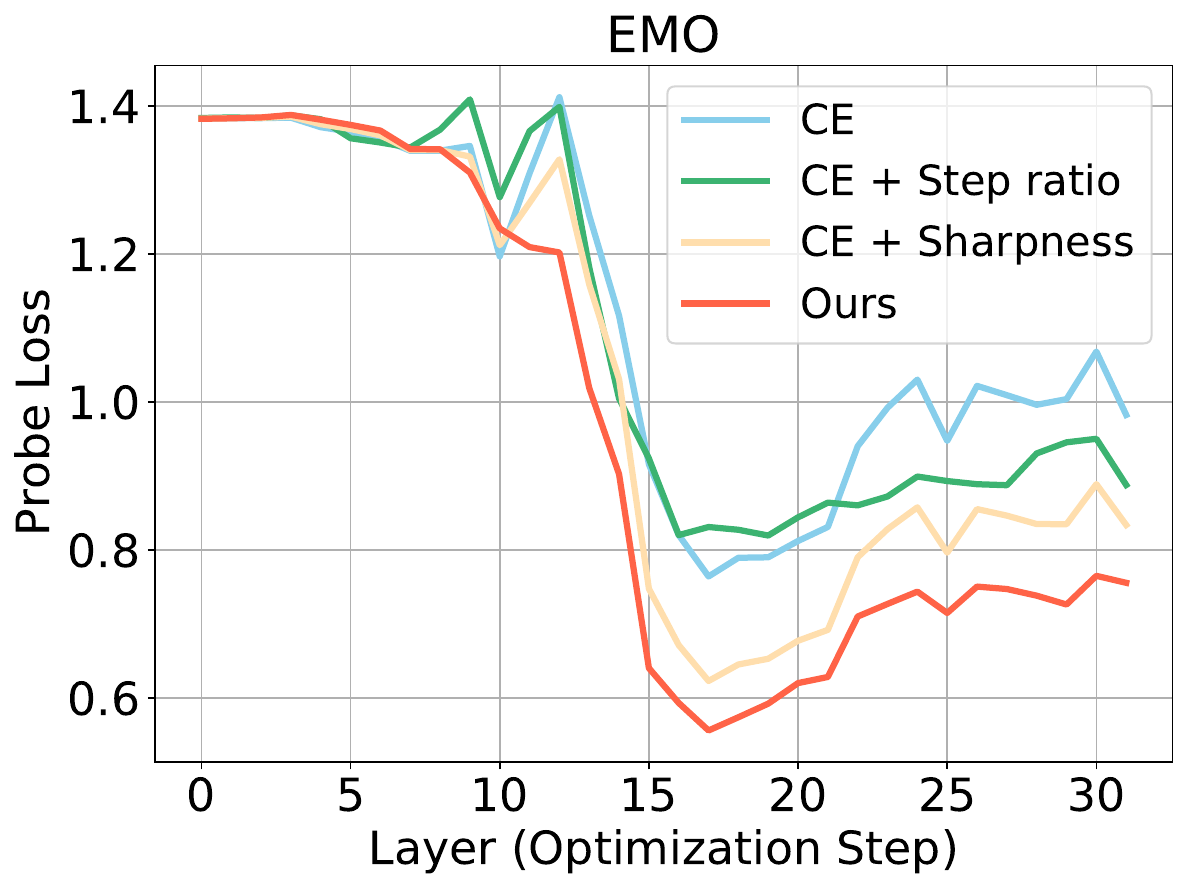}
    \includegraphics[width=0.32\linewidth]{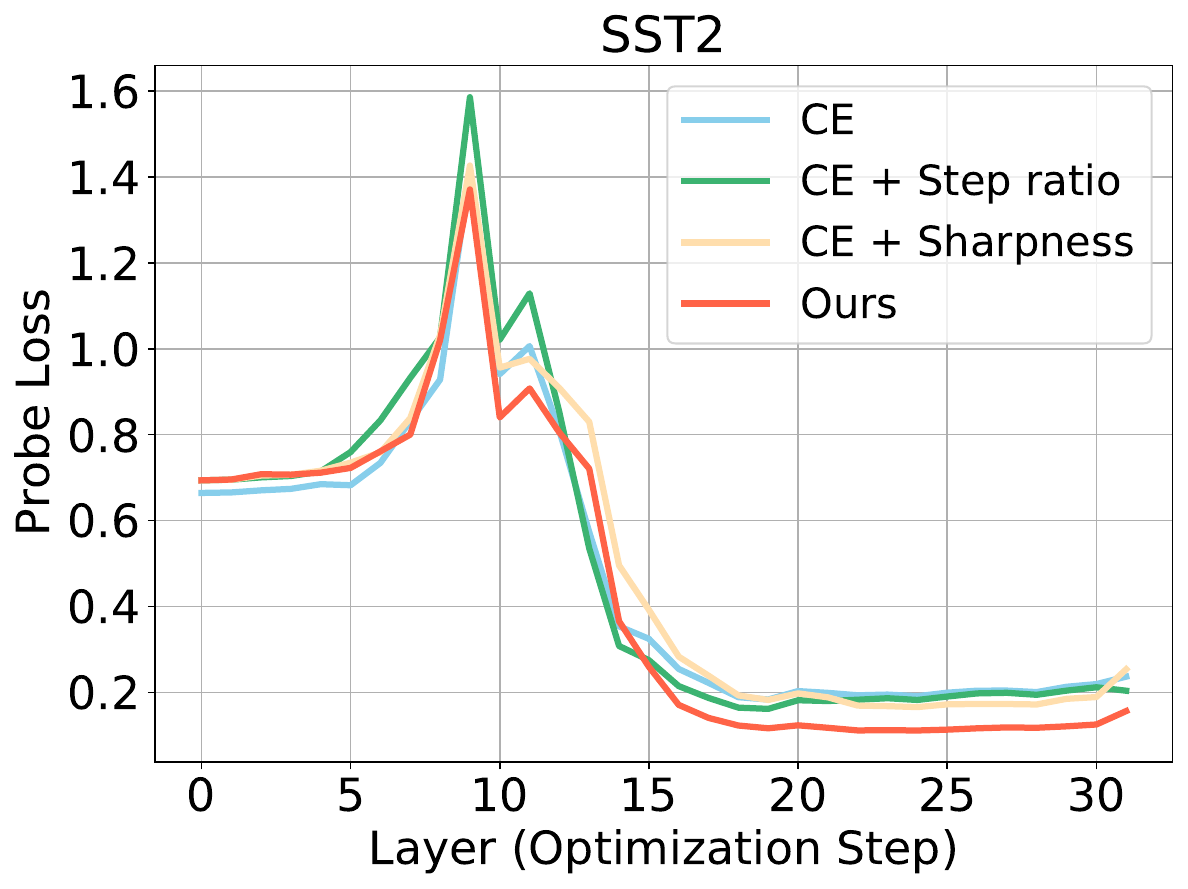}
    \includegraphics[width=0.32\linewidth]{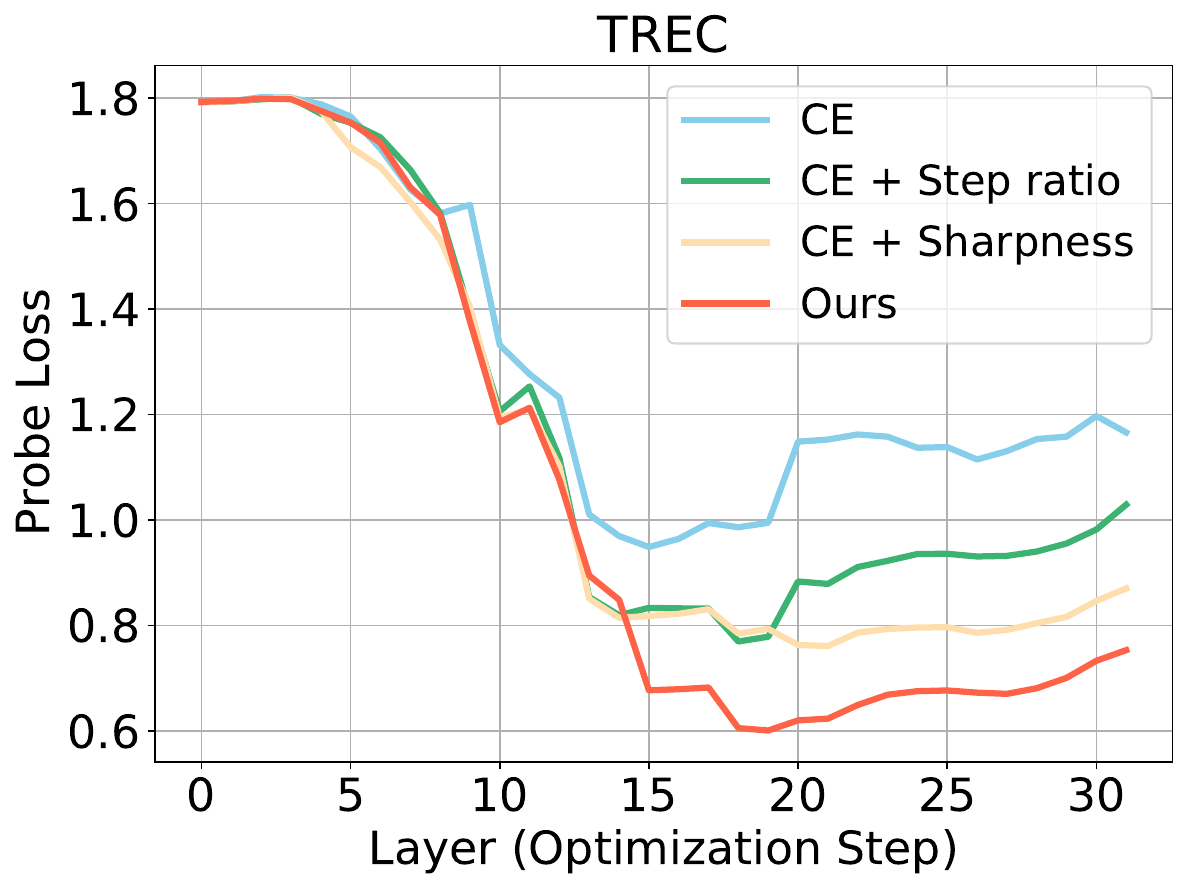}
    \caption{Probe Analysis on EMO, SST, and TREC. The layer-wise prediction accuracy (\%) and loss on the test set comparison is conducted with four competitors, CE, CE + Step ratio, CE + Sharpness, and Ours. CE denotes the Llama2-7B model adapted to the target set through CrossEntropy loss via updating the layernorm parameters; CE + Step ratio follows the same adaptation protocol as CE but with Step ratio penalty attached in Eq.~\ref{eq:step_ratio_loss}; CE + Sharpness uses Sharpness in Eq.~\ref{hessian_approx} instead while Ours utilizing the \ouracronym{} objective in Eq.~\ref{eq:main_objective}.} 
    \label{fig:probe_analysis}
\vspace{-1 em}
\end{figure}
\textbf{Layer-wise Sharpness Analysis.}
We study the effect of \ouracronym{} on the models' layer-wise behavior across different datasets. By estimating the average sharpness over different test samples by Eq.~\ref{hessian_approx}. One can observe that in Figure~\ref{fig:flat_analysis}, the model trained by \ouracronym{} consistently illustrates the lowest sharpness among the baseline models across all the layers. Especially, at the end of the optimization steps, without the regularization in Eq.~\ref{hessian_approx}, the sharpness quantity of the model trained by CE and the base model increases dramatically. This phenomenon reflects the sensitivity of the loss to the different test samples and determines the generalization performance of the model, which can be further justified by Table~\ref{tab:main_tabel}. To be more specific, the models attain an increase in sharpness at the final hidden layers, resulting in inferior test accuracy from the target domain.

\begin{figure}
    \centering
    \includegraphics[width=0.32\linewidth]{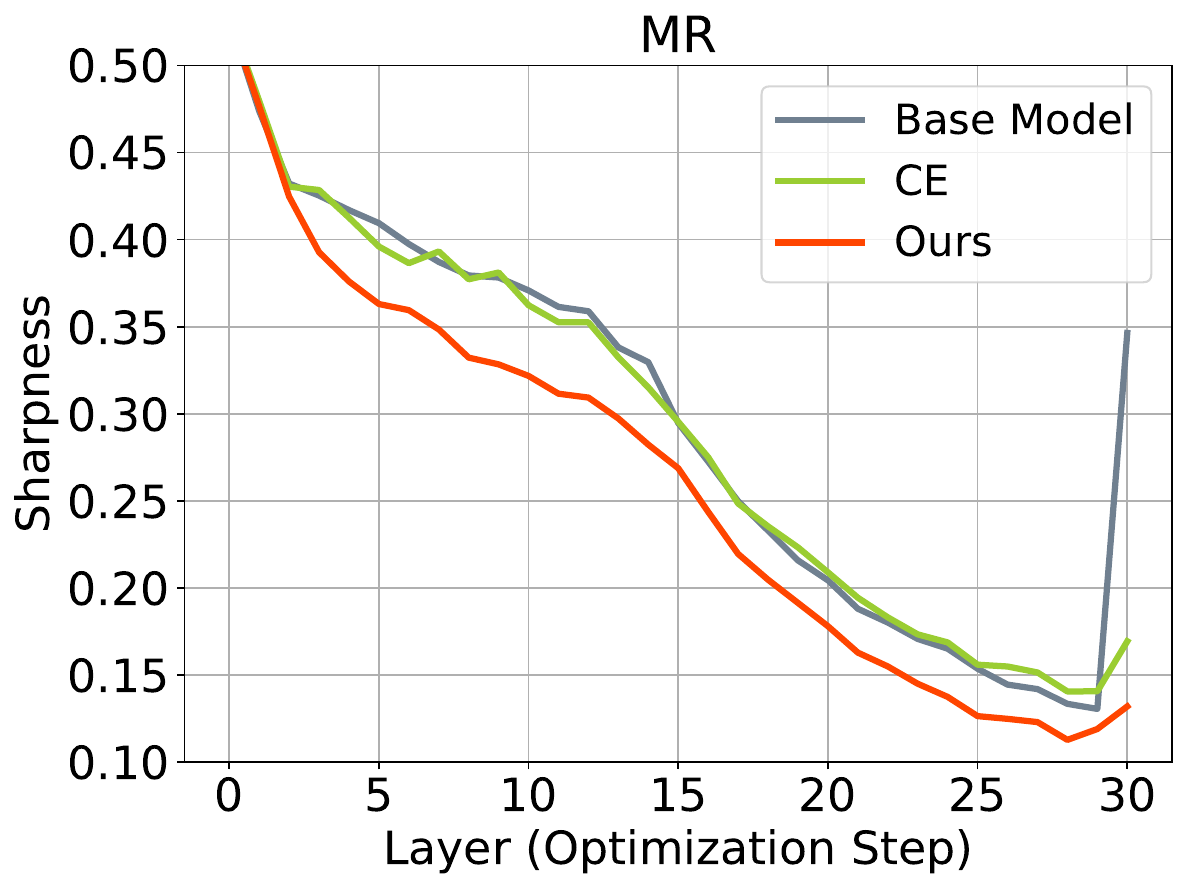}
    \includegraphics[width=0.32\linewidth]{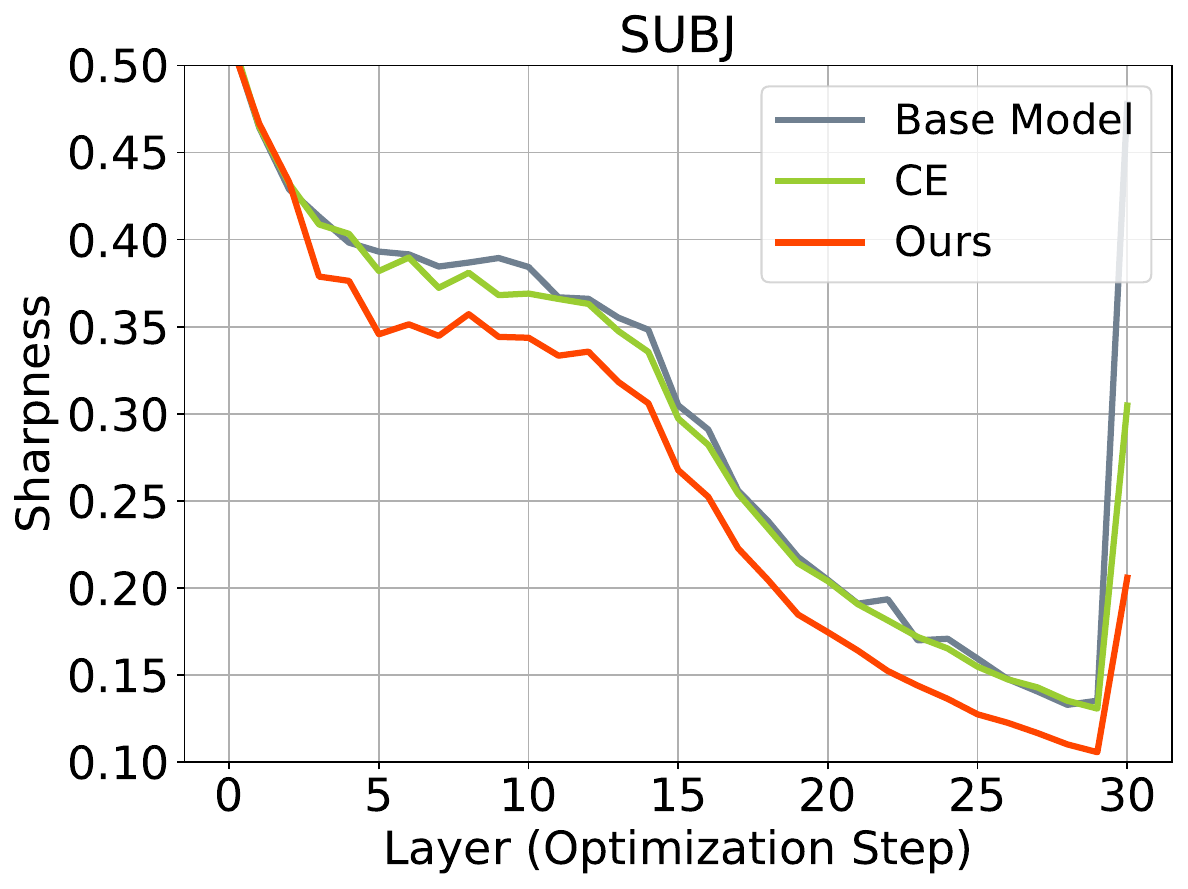}
    \includegraphics[width=0.32\linewidth]{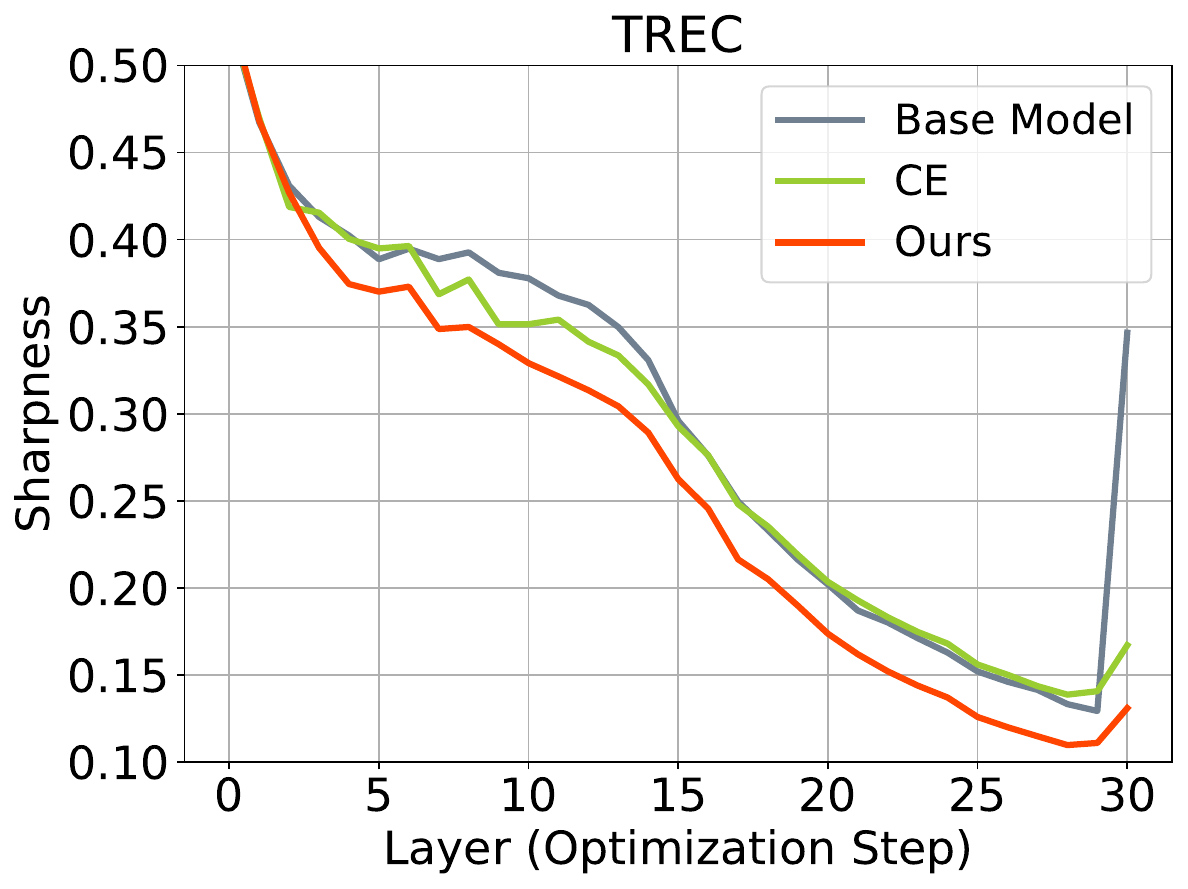}
    \caption{Sharpness comparison on MR, Subj and TREC. The average sharpness over the test samples across different layers on three models, with base model denoting the few-shot (ICL) setting, CE representing the model trained by the CrossEntropy on the demonstration samples, and Ours trained by \ouracronym{} via the same adaptation protocol as that utilised in CE.}
    \label{fig:flat_analysis}
    \vspace{-0.4 em}
\end{figure}

\textbf{Layer-wise Step Ratio Analysis.}
We evaluate the optimization quality of \ouracronym{} by comparing the average step ratio on the test set across different optimization steps. Due to minimal visual differences in earlier layers, we focus on the last 16 layers in Figure~\ref{fig:step_ratio_analysis}. Notably, optimizing the step-ratio objective in \ouracronym{} results in smoother and more consistent contraction across layers, highlighting the effectiveness of our learned preconditioning mechanism. In contrast, baseline models exhibit higher and more erratic step ratios, particularly with sharp increases in the later layers, suggesting an unstable optimization trajectory. Empirically, it is observed that models with flatter or more contractive step ratio profiles tend to achieve better performance, supporting our analysis that step-ratio minimization enhances optimization efficiency.
\begin{figure}
    \centering
    \includegraphics[width=0.32\linewidth]{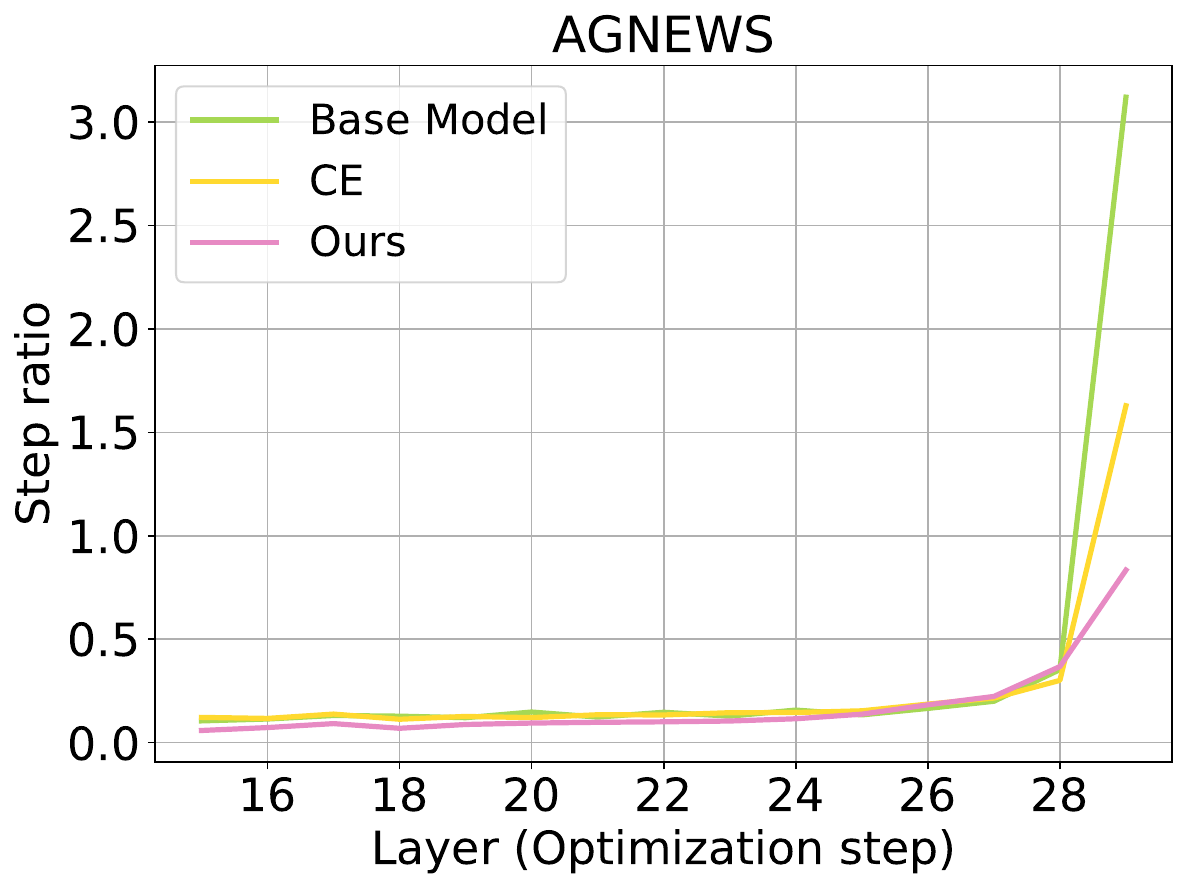}
    \includegraphics[width=0.32\linewidth]{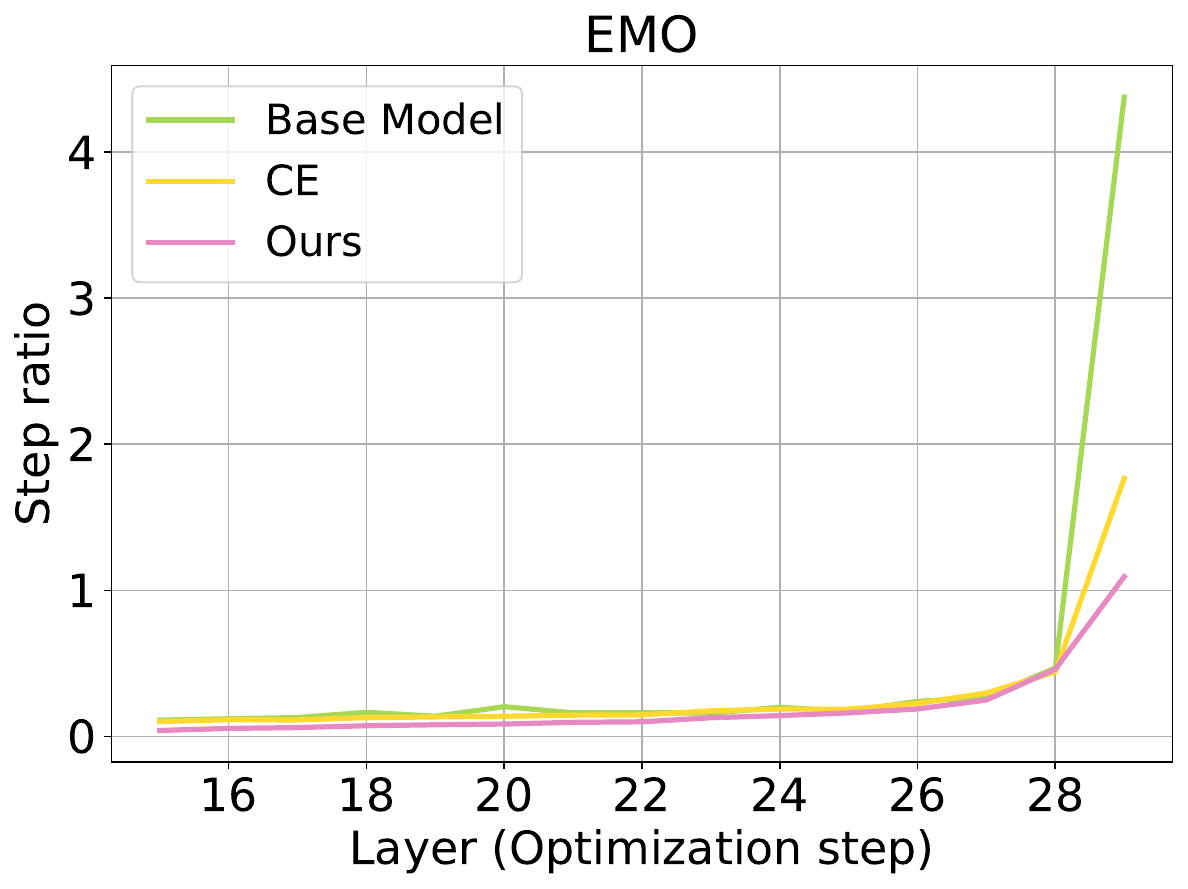}
    \includegraphics[width=0.32\linewidth]{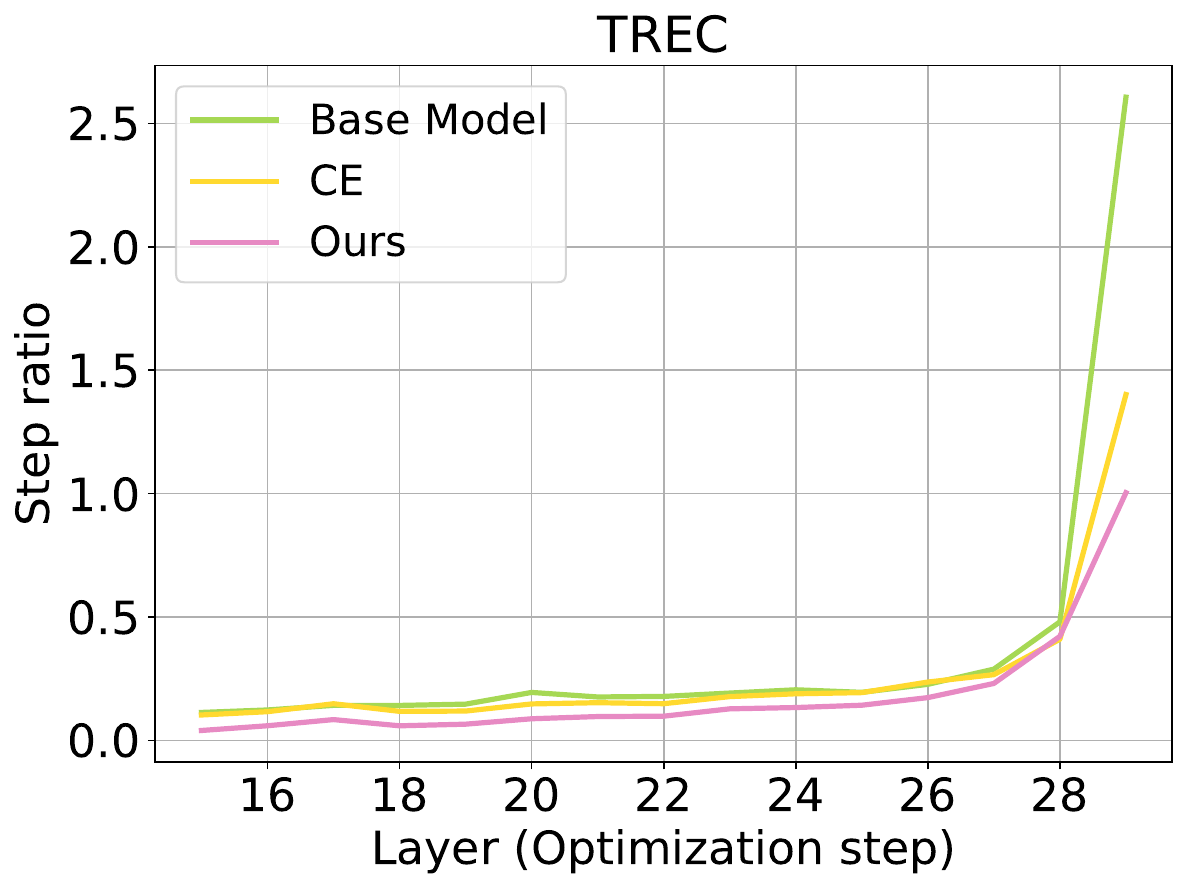}
    \caption{Step ratio comparison across the test sets of AGNews, Subj, and TREC over each layer of models based on Llama-7B. We compare the base model with demonstration examples (Base model), the model fine-tuned using CrossEntropy (CE), and the model tuned with \ouracronym{} (Ours).}
    \label{fig:step_ratio_analysis}
    \vspace{-0.4 em}
\end{figure}

\textbf{Comparison with LoRA.} We compare our method with LoRA~\citep{lora} for the adaptation efficiency based on Llama2-7B~\citep{llama2}. A lightweight version where the learnable adapters are only applied to the value and query project layers is applied to different numbers of ranks, ranging from 1, 16, 64, and 128, to eliminate the effects from this hyperparameter selection. To further reduce the amount of learnable parameters, the bias sets of the adapters are not learned. The LoRA adapters are trained on the same adaptation datasets with the fairly tuned hyperparameter following the details in Appx.~\ref{lora_details}. From Table~\ref{tab:LoRA}, one can observe that \ouracronym{} can defeat all the LoRA models, demonstrating a significant parameter efficiency for the adaptation with the few-shot demonstration examples, while the LoRA models, with the smallest amount of learnable parameters, still approximately double ours and struggle to achieve the same level of performance as ours. In addition, the LoRA rank is sensitive to the datasets, leading to a greater hyperparameter tuning burden, while in this very few sample case, LoRA models in general gain relatively high variance due to the overfitting on the demonstration sample selection. We trained a LoRA model with a similar parameter amount to our model, and our objective resulted in that \ouracronym{} boosts the LoRA model performance but still fails to defeat ours. This is because the LoRA model dramatically modifies the essential optimization component, the gradient, while ours only tunes the preconditioning matrices. 

\begin{table*}[t]
  \caption{The comparison between our method and LoRA on various datasets. Llama2-7B and Llama3-8B-Instruct are used as the base model with the rank ranging from 1, 16, 64, and 128. All the methods are trained and evaluated with 5 trials with different random seeds, along with the mean performance on classification accuracy (\%) and variance reported. The numbers of trainable parameters for all the settings are attached. }
  \label{tab:LoRA}
  \centering
  \adjustbox{max width=0.95\linewidth}{%
  \begin{tabular}{l|cccc|cc}
    \toprule
     & \multicolumn{6}{c}{Llama2‑7B}\\
    \midrule
    Dataset                     & Rank 128                & Rank 64                & Rank 16                & Rank 1                & Rank1 (our loss)        & Ours       \\
    \midrule
    SST-2                       & $87.64_{\pm 5.63}$      & $80.64_{\pm 15.38}$    & $86.36_{\pm 6.99}$     & $89.64_{\pm 3.23}$    & $88.48_{\pm 3.34}$      & $\mathbf{95.84}_{\pm0.41}$  \\
    SST-5                       & $28.12_{\pm 9.20}$      & $37.16_{\pm 8.59}$     & $31.60_{\pm 9.10}$     & $24.84_{\pm 9.92}$    & $20.80_{\pm 0.89}$      & $\mathbf{50.36}_{\pm3.28}$ \\
    TREC                        & $52.60_{\pm 24.63}$     & $62.68_{\pm 21.30}$    & $33.68_{\pm 23.15}$    & $22.88_{\pm 9.09}$    & $24.32_{\pm 6.02}$      & $\mathbf{85.92}_{\pm1.90}$ \\
    AGNews                      & $82.40_{\pm 4.74}$      & $62.4_{\pm 26.26}$     & $73.16_{\pm 23.16}$    & $50.56_{\pm 31.36}$   & $62.04_{\pm 29.5}$      & $\mathbf{89.00}_{\pm1.26}$ \\
    Subj                        & $75.44_{\pm 9.57}$      & $70.84_{\pm 10.41}$    & $72.16_{\pm 14.52}$    & $72.08_{\pm 8.74}$    & $72.84_{\pm 11.33}$     & $\mathbf{88.40}_{\pm4.76}$ \\
    HateSpeech18                & $72.28_{\pm 10.41}$     & $73.88_{\pm 6.46}$     & $67.96_{\pm 12.51}$    & $69.14_{\pm 9.76}$    & $69.38_{\pm 10.77}$     & $\mathbf{83.04}_{\pm3.72}$ \\
    DBPedia                     & $93.20_{\pm 2.32}$      & $90.76_{\pm 3.60}$     & $95.16_{\pm 0.43}$     & $59.44_{\pm 42.01}$   & $74.6_{\pm 33.23}$      & $\mathbf{97.72}_{\pm0.52}$ \\
    EmoC                        & $34.40_{\pm 18.45}$     & $42.64_{\pm 21.86}$    & $58.96_{\pm 17.70}$    & $25.64_{\pm 2.95}$    & $33.24_{\pm 18.28}$     & $\mathbf{76.60}_{\pm2.39}$ \\
    MR                          & $82.68_{\pm 5.98}$      & $65.36_{\pm 18.40}$    & $74.12_{\pm 20.56}$    & $64.84_{\pm 13.86}$   & $64.88_{\pm 18.48}$     & $\mathbf{94.36}_{\pm1.13}$ \\
    \midrule
    Trainable parameters (Million)  & 67.10 M                  & 33.55 M                 & 8.39 M                & 0.53 M                & 0.53 M                 & $\mathbf{0.27}$ M \\
    \midrule
     & \multicolumn{6}{c}{Llama3‑8B-Instruct}\\
    \midrule
    SST-2                       & $78.72_{\pm 13.37}$  & $88.32_{\pm 2.57}$   & $80.92_{\pm 12.05}$   & $87.08_{\pm 4.81}$   & $87.40_{\pm 8.05}$  & $\mathbf{97.08}_{\pm0.27}$  \\
    SST-5                       & $27.80_{\pm 9.24}$   & $20.32_{\pm 1.17}$   & $27.76_{\pm 5.46}$    & $19.52_{\pm 0.45}$   & $20.32_{\pm 1.72}$  & $\mathbf{58.32}_{\pm2.74}$ \\
    TREC                        & $61.12_{\pm 28.41}$  & $59.00_{\pm 29.23}$  & $70.92_{\pm 30.32}$   & $25.88_{\pm 9.17}$   & $27.52_{\pm 6.09}$  & $\mathbf{89.06}_{\pm1.49}$ \\
    AGNews                      & $50.76_{\pm 26.15}$  & $50.76_{\pm 23.46}$  & $47.88_{\pm 24.60}$   & $39.72_{\pm 25.1}$   & $39.12_{\pm 24.40}$ & $\mathbf{91.84}_{\pm0.61}$ \\
    Subj                        & $77.84_{\pm 13.73}$  & $80.28_{\pm 6.97}$   & $81.92_{\pm 6.57}$    & $78.92_{\pm 8.43}$   & $80.96_{\pm 5.55}$  & $\mathbf{92.64}_{\pm3.43}$ \\
    HateSpeech18                & $71.08_{\pm 11.19}$  & $70.08_{\pm 9.56}$   & $69.36_{\pm 6.94}$    & $63.60_{\pm 9.07}$   & $68.92_{\pm 8.33}$  & $\mathbf{89.47}_{\pm0.47}$ \\
    DBPedia                     & $91.00_{\pm 1.07}$   & $88.32_{\pm 0.83}$   & $92.92_{\pm 2.07}$    & $57.88_{\pm 39.37}$  & $73.52_{\pm 32.69}$ & $\mathbf{97.92}_{\pm1.06}$ \\
    EmoC                        & $33.24_{\pm 13.28}$  & $38.44_{\pm 11.79}$  & $47.6_{\pm 17.84}$    & $24.68_{\pm 1.17}$   & $27.68_{\pm 3.65}$  & $\mathbf{79.24}_{\pm4.87}$ \\
    MR                          & $87.52_{\pm 2.34}$   & $87.60_{\pm 3.24}$   & $88.28_{\pm 1.95}$    & $87.20_{\pm 3.38}$   & $87.84_{\pm 13.59}$ & $\mathbf{94.56}_{\pm0.51}$ \\
    \midrule
    Trainable parameters (Million)  & 54.53 M                  & 27.26 M           & 6.82 M                & 0.43 M               & 0.43 M    & $\mathbf{0.27}$ M \\
    \bottomrule
  \end{tabular}
  }
\end{table*}

\textbf{Inference Cost.} We compare the inference-time computational complexity of our model against baseline methods in Table~\ref{tab:model_complexity}. Notably, since \ouracronym{} is designed to adapt to the target domain at inference without additional overhead, it introduces no theoretical increase in computational cost. In contrast, the ICL approaches often require restoring demonstration examples or incorporating computationally intensive inference algorithms into the base model. As a result, \ouracronym{} achieves the low inference inference, which is the same as that of zero-shot methods, a key objective for most existing ICL approaches. In addition, we record the practical training and inference cost of Llama3-8B-Instruct on an NVIDIA RTX A6000 for further illustration.

\begin{table*}[t]
  \caption{Model complexity comparison. We compare the theoretical inference parameter complexity introduced by the ICL-based methods with \ouracronym{} where M,  D, and L represent the number of demonstration tokens, the model's dimensionality, and the number of layers in the architecture, respectively. Q denotes the number of additional learnable tokens used in the Soft-prompt method, while 1/K corresponds to the compression rate of the associated context-compression technique. We also attach the practical average time (seconds) cost on DBPedia, the most time-consuming one, over five trials.}
  \label{tab:model_complexity}
  \centering
  \adjustbox{max width=1.0\textwidth}{%
  \begin{tabular}{l|cc|cccc|c}
    \toprule
    Dataset                     & Zero-shot & Few-shot (ICL) & Soft-prompt & Label-anchor & Task-vector & I2CL    & \ouracronym{} \\
    \midrule
    Introduced parameters       & 0         & 2MDL           & 2DL         & (2M+Q)DL     & 2(M/K)DL    & 2DL      &  0 \\
    Inference cost (s)          &  51.24    &  59.93          & 53.64       &   52.41      &  56.78      & 52.59    & 51.37         \\
    \bottomrule
  \end{tabular}
  }
\end{table*}
\section{Conclusion}
In this work, we address the problem of few-shot adaptation in Large Language Models (LLMs). We build on the perspective that the forward pass of an LLM can be viewed as an optimization process, and extend this interpretation to a sequence of preconditioned gradient descent steps. Based on this view, we propose tuning the layer-wise preconditioning matrices to improve both convergence speed and generalization, using only a few target-task samples. To this end, two theoretically motivated objective terms are introduced. We evaluate our method across multiple LLMs and benchmark datasets, demonstrating that adaptation with our objective yields substantial performance gains over strong baselines. Our approach also points to a promising direction for low-cost LLM adaptation, particularly in settings with limited data and computational resources.
\clearpage
\bibliographystyle{reference}
\bibliography{reference}

\newpage
\appendix
\section{Few-shot performance\label{few-shot_other_model}}
We report the entire few-shot performance of all the models, Llama2-7B, Llama3-8B-Instruct, Llama3-8B, and GPT2-XL, in Table~\ref{tab:all_all_models_comparison} to comprehensively evaluate the effectiveness of \ouracronym{}. 

\begin{table*}[h]
  \caption{Comparison between \ouracronym{} and other baseline algorithms on LLama2-7B, LLama3-8B-Instruct, LLama3-8B, and GPT2-XL. Mean accuracy and standard deviation across five random seeds are reported. AGnews and DBPedia are not evaluated for GPT2-XL due to its limitation of context window size. \textbf{Best} results are highlighted in bold.}
  \label{tab:all_all_models_comparison}
  \centering
  \adjustbox{max width=\textwidth}{
  \begin{tabular}{l|ccccccccc}
     \toprule
     Dataset & SST‑2 & SST‑5 & TREC & AGNews & Subj & HateSp18 & DBPedia & EmoC & MR \\
     \midrule
     Method & \multicolumn{9}{c}{Llama2‑7B} \\
     \midrule
     Zero‑shot          & 83.00 & 27.00 & 50.00 & 70.20 & 51.40 & 54.20 & 72.00 & 41.80 & 73.60 \\
     Few‑shot (ICL)     & $94.44_{\pm1.44}$ & $41.72_{\pm3.68}$ & $77.32_{\pm4.41}$ & $85.68_{\pm2.00}$ & $52.56_{\pm3.09}$ & $70.24_{\pm5.80}$ & $96.64_{\pm0.48}$ & $75.48_{\pm1.63}$ & $93.24_{\pm0.50}$ \\
     Soft‑prompt        & $56.24_{\pm6.99}$ & $24.24_{\pm2.96}$ & $55.20_{\pm4.14}$ & $78.00_{\pm7.60}$ & $57.40_{\pm4.93}$ & $59.56_{\pm6.96}$ & $74.40_{\pm6.43}$ & $35.08_{\pm5.29}$ & $54.32_{\pm1.76}$ \\
     Label‑anchor       & $83.32_{\pm5.95}$ & $27.68_{\pm4.21}$ & $77.48_{\pm3.49}$ & $83.72_{\pm1.04}$ & $53.00_{\pm2.95}$ & $64.52_{\pm8.09}$ & $81.40_{\pm3.67}$ & $59.12_{\pm10.60}$ & $84.40_{\pm5.89}$ \\
     Task‑vector        & $81.44_{\pm4.73}$ & $25.96_{\pm0.59}$ & $65.68_{\pm1.93}$ & $79.68_{\pm4.07}$ & $58.56_{\pm4.91}$ & $67.68_{\pm3.70}$ & $89.48_{\pm2.58}$ & $44.64_{\pm3.53}$ & $82.32_{\pm5.37}$ \\
     IA3                & $93.28_{\pm2.29}$ & $46.08_{\pm2.11}$ & $84.40_{\pm5.99}$ & $87.04_{\pm1.97}$ & $71.92_{\pm8.08}$ & $72.44_{\pm2.59}$ & $94.68_{\pm1.09}$ & $64.32_{\pm1.95}$ & $88.80_{\pm2.28}$ \\
     I2CL               & $87.68_{\pm2.47}$ & $39.12_{\pm2.69}$ & $78.56_{\pm5.32}$ & $85.48_{\pm1.16}$ & $73.84_{\pm3.84}$ & $69.88_{\pm5.67}$ & $90.16_{\pm1.86}$ & $63.72_{\pm1.37}$ & $87.68_{\pm2.26}$ \\
     \midrule
     \textbf{\ouracronym{} (Ours)} & $\mathbf{95.84}_{\pm0.41}$ & $\mathbf{50.36}_{\pm3.28}$ & $\mathbf{85.92}_{\pm1.90}$ & $\mathbf{89.00}_{\pm1.26}$ & $\mathbf{88.40}_{\pm4.76}$ & $\mathbf{83.04}_{\pm3.72}$ & $\mathbf{97.72}_{\pm0.52}$ & $\mathbf{76.60}_{\pm2.39}$ & $\mathbf{94.36}_{\pm1.13}$ \\
     \midrule
     & \multicolumn{9}{c}{Llama3‑8B‑Instruct}\\
     \midrule
     Zero‑shot          & 93.00 & 35.80 & 71.00 & 80.40 & 50.80 & 67.80 & 67.40 & 53.60 & 86.40 \\
     Few‑shot (ICL)     & $96.48_{\pm0.48}$ & $46.72_{\pm2.64}$ & $79.92_{\pm5.83}$ & $89.64_{\pm0.59}$ & $57.48_{\pm7.08}$ & $52.72_{\pm2.35}$ & $97.00_{\pm0.28}$ & $65.28_{\pm4.29}$ & $93.12_{\pm0.16}$ \\
     Soft‑prompt        & $84.68_{\pm7.71}$ & $38.40_{\pm5.68}$ & $75.68_{\pm8.17}$ & $84.96_{\pm3.80}$ & $73.28_{\pm5.41}$ & $62.72_{\pm5.54}$ & $82.88_{\pm6.45}$ & $55.32_{\pm9.74}$ & $75.76_{\pm7.71}$ \\
     Label‑anchor       & $93.36_{\pm2.39}$ & $40.54_{\pm5.44}$ & $78.28_{\pm4.07}$ & $84.64_{\pm1.61}$ & $54.16_{\pm2.25}$ & $69.48_{\pm5.43}$ & $87.48_{\pm3.04}$ & $59.36_{\pm2.48}$ & $88.20_{\pm3.69}$ \\
     Task‑vector        & $94.80_{\pm2.02}$ & $56.42_{\pm1.15}$ & $79.83_{\pm1.52}$ & $89.21_{\pm0.58}$ & $76.08_{\pm1.23}$ & $67.12_{\pm0.32}$ & $79.52_{\pm1.84}$ & $57.96_{\pm4.59}$ & $86.52_{\pm0.64}$ \\
     IA3                & $94.32_{\pm0.82}$ & $49.24_{\pm2.06}$ & $87.60_{\pm3.46}$ & $88.36_{\pm1.80}$ & $82.04_{\pm7.43}$ & $77.20_{\pm4.37}$ & $92.56_{\pm1.82}$ & $68.04_{\pm2.24}$ & $91.76_{\pm0.43}$ \\
     I2CL               & $90.84_{\pm0.98}$ & $48.96_{\pm2.48}$ & $79.60_{\pm6.22}$ & $88.96_{\pm2.03}$ & $81.48_{\pm4.68}$ & $65.88_{\pm3.61}$ & $91.20_{\pm2.03}$ & $64.32_{\pm2.05}$ & $88.88_{\pm0.61}$ \\
     \midrule
     \textbf{\ouracronym{} (Ours)}     & $\mathbf{97.08}_{\pm0.27}$ & $\mathbf{58.32}_{\pm2.74}$ & $\mathbf{89.06}_{\pm1.49}$ & $\mathbf{91.84}_{\pm0.61}$ & $\mathbf{92.64}_{\pm3.43}$ & $\mathbf{89.47}_{\pm0.47}$ & $\mathbf{97.92}_{\pm1.06}$ & $\mathbf{79.24}_{\pm4.87}$ & $\mathbf{94.56}_{\pm0.51}$ \\
     \midrule
     Method & \multicolumn{9}{c}{Llama3‑8B} \\
     \midrule
     Zero‑shot          & 56.00 & 33.20 & 66.40 & 85.80 & 50.60 & 50.80 & 55.80 & 40.60 & 53.80 \\
     Few‑shot (ICL)     & $95.32_{\pm0.74}$ & $44.36_{\pm1.93}$ & $74.48_{\pm6.17}$ & $87.20_{\pm1.04}$ & $63.84_{\pm8.27}$ & $70.60_{\pm5.92}$ & $85.56_{\pm3.67}$ & $52.30_{\pm3.62}$ & $91.88_{\pm0.86}$ \\
     Soft‑prompt        & $59.44_{\pm12.5}$ & $28.44_{\pm6.93}$ & $70.32_{\pm10.62}$ & $85.68_{\pm2.58}$ & $69.12_{\pm9.85}$ & $63.20_{\pm4.88}$ & $85.36_{\pm3.98}$ & $54.20_{\pm11.79}$ & $60.28_{\pm11.59}$ \\
     Label‑anchor       & $84.14_{\pm0.20}$ & $35.44_{\pm0.48}$ & $77.68_{\pm2.90}$ & $86.20_{\pm1.81}$ & $64.40_{\pm0.38}$ & $68.08_{\pm1.27}$ & $74.24_{\pm2.71}$ & $59.72_{\pm3.64}$ & $84.28_{\pm0.97}$ \\
     Task‑vector        & $94.28_{\pm8.96}$ & $37.20_{\pm2.83}$ & $75.80_{\pm1.50}$ & $85.00_{\pm3.74}$ & $68.40_{\pm0.80}$ & $55.60_{\pm3.41}$ & $73.28_{\pm1.27}$ & $54.64_{\pm0.99}$ & $75.28_{\pm4.70}$ \\
     IA3                & $92.72_{\pm1.58}$ & $46.40_{\pm2.80}$ & $80.04_{\pm2.85}$ & $85.44_{\pm2.63}$ & $69.24_{\pm6.15}$ & $62.64_{\pm3.86}$ & $83.20_{\pm3.93}$ & $64.36_{\pm3.16}$ & $89.52_{\pm1.48}$ \\
     I2CL               & $87.36_{\pm3.21}$ & $39.32_{\pm4.02}$ & $77.72_{\pm6.99}$ & $85.20_{\pm2.32}$ & $70.03_{\pm5.39}$ & $58.08_{\pm9.79}$ & $86.44_{\pm2.41}$ & $62.64_{\pm5.96}$ & $86.84_{\pm7.29}$ \\
     \midrule
     \textbf{\ouracronym{} (Ours)} & $\mathbf{96.92}_{\pm0.35}$ & $\mathbf{54.96}_{\pm3.29}$ & $\mathbf{87.52}_{\pm4.40}$ & $\mathbf{90.36}_{\pm0.93}$ & $\mathbf{91.44}_{\pm2.34}$ & $\mathbf{86.76}_{\pm5.71}$ & $\mathbf{97.76}_{\pm0.45}$ & $\mathbf{78.86}_{\pm5.85}$ & $\mathbf{94.04}_{\pm0.34}$ \\
     \midrule
     Method & \multicolumn{9}{c}{GPT2‑XL} \\
     \midrule
     Zero‑shot          & 74.76 & 30.44 & 35.40 & --    & 64.88 & 70.84 & --    & 37.88 & 71.36 \\
     Few‑shot (ICL)     & $73.65_{\pm8.89}$ & $35.95_{\pm2.39}$ & $60.64_{\pm5.00}$ & -- & $63.82_{\pm10.55}$ & $51.86_{\pm3.22}$ & -- & $38.62_{\pm6.87}$ & $75.79_{\pm9.25}$ \\
     Soft‑prompt        & $61.04_{\pm3.45}$ & $23.96_{\pm2.09}$ & $40.60_{\pm10.15}$ & -- & $55.44_{\pm4.12}$ & $63.92_{\pm7.06}$ & -- & $36.68_{\pm2.70}$ & $57.60_{\pm3.53}$ \\
     Label‑anchor       & $63.40_{\pm8.82}$ & $22.36_{\pm3.37}$ & $66.36_{\pm10.69}$ & -- & $55.56_{\pm4.26}$ & $54.88_{\pm4.53}$ & -- & $36.68_{\pm2.70}$ & $60.20_{\pm3.32}$ \\
     Task‑vector        & $81.08_{\pm4.87}$ & $28.52_{\pm1.37}$ & $41.40_{\pm5.35}$ & -- & $71.81_{\pm1.86}$ & $62.48_{\pm2.83}$ & -- & $37.60_{\pm2.48}$ & $78.40_{\pm2.26}$ \\
     IA3                & $86.64_{\pm2.89}$ & $40.52_{\pm2.25}$ & $70.96_{\pm8.61}$ & -- & $71.52_{\pm8.46}$ & $70.84_{\pm3.63}$ & -- & $62.24_{\pm3.50}$ & $83.24_{\pm1.09}$ \\
     I2CL               & $80.16_{\pm3.98}$ & $35.04_{\pm2.60}$ & $51.48_{\pm5.26}$ & -- & $65.96_{\pm4.83}$ & $68.32_{\pm4.76}$ & -- & $47.92_{\pm1.84}$ & $83.20_{\pm3.29}$ \\
     \midrule
     \textbf{\ouracronym{} (Ours)} & $\mathbf{88.68}_{\pm2.66}$ & $\mathbf{42.48}_{\pm2.51}$ & $\mathbf{70.60}_{\pm6.44}$ & -- & $\mathbf{86.11}_{\pm4.29}$ & $\mathbf{71.44}_{\pm8.65}$ & -- & $\mathbf{65.30}_{\pm4.18}$ & $\mathbf{84.80}_{\pm6.21}$ \\
     \bottomrule
  \end{tabular}
  }
\end{table*}

\section{Proof of Theorem~\ref{corollary:step_ratio}}

\stepratioencourages*
\begin{proof}
By Taylor's theorem, for a smooth function f, near point $x_t$, we have: 
\begin{align*}
    f(x)  = f(x_t) + \nabla f(x_t)^T (x-x_t) + \frac{1}{2}(x-x_t)^T H_t (x-x_t).
\end{align*}
Given the preconditioned gradient descent:
\begin{align*}
   x_{t+1} - x_t = - \eta P_t\nabla f(x_t),
\end{align*}
with the quadratic approximation, we approximate the gradient: 
\begin{align*}
\nabla f(x_t) \approx H_t (x_t - x^*),
\end{align*}
then
\begin{align*}
   x_{t+1} - x_t = - \eta P_t H_t (x_t - x^*),
\end{align*}
and 
\begin{align*}
   x_{t+1} - x^* = (I- \eta P_t H_t) (x_t - x^*).
\end{align*}
Then the step-ratio objective becomes: 
\begin{align*}
    \mathcal{J}(\theta) = \sum^{T-1}_{t = 1} \frac{\|x_t - x_{t+1}\|}{\|x_t - x_{t-1} \|} = \sum^{T-1}_{t = 1} \frac{\|-\eta P_t H_t( x_t - x^*)\|}{\|x_t - x_{t-1} \|}, 
\end{align*}
and operator $I - \eta P_t H_t$ governs convergence. 
Assuming: 
\begin{align*}
\rho_t = \text{spectral radius}(I - \eta P_t H_t) < 1. 
\end{align*}
Then minimizing $\mathcal{J}(P)$) ensures $\rho_t$ decreases over time: 
\begin{align*}
x_{t+1} - x^* = (I - \eta P_t H_t) (x_t - x^*), \\
\end{align*}
which leads to 
\begin{align*}
\|x_{t+1} - x^*\| = \|(I - \eta P_t H_t) (x_t - x^*) \| \leq \rho_t \|x_t -x^* \|, \,\, \rho_t < \rho_{t-1}.
\end{align*}
\end{proof}

\section{Proof of Theorem~\label{proof:stepwisegeneralization} }
\stepwisegeneralization*

\begin{proof}
The proof follows from stability-based generalization bounds and Taylor expansion.

Let $\Delta_t = \theta_{t+1} - \theta_t = -\eta P_t \nabla \mathcal{L}_{\text{train}}(\theta_t)$.

By a second-order Taylor approximation, for a perturbation $\epsilon$:
\begin{align*}
\mathcal{L}(Z + \epsilon) \approx \mathcal{L}(Z) + \nabla \mathcal{L}(Z)^T \epsilon + \frac{1}{2} \epsilon^T \nabla^2 \mathcal{L}(Z) \epsilon.
\end{align*}

Consider the increase in loss under Gaussian perturbation $\epsilon \sim \mathcal{N}(0, \Sigma)$, used in PAC-Bayes analysis. The expected curvature-based increase is:
\begin{align*}
\mathbb{E}[\mathcal{L}(Z_T + \epsilon) - \mathcal{L}(Z_T)] \approx \frac{1}{2} \operatorname{Tr}(\Sigma \nabla^2 \mathcal{L}(Z_T)).
\end{align*}

Since $\Sigma$ is shaped by optimization history through $\{\Delta_t\}_{t=1}^T$~\citep{dinh2017sharp,neyshabur2018a}. Then, the effective curvature encountered is influenced by the preconditioned curvature norm:
\begin{align*}
\|\Delta_t^T \nabla^2 \mathcal{L}_{\text{train}}(\theta_t) \Delta_t\| = \eta^2 \nabla \mathcal{L}(\theta_t)^T P_t \nabla^2 \mathcal{L}_{\text{train}}(\theta_t) P_t \nabla \mathcal{L}(\theta_t)
\leq \eta^2 G^2 \|P_t \nabla^2 \mathcal{L}_{\text{train}}(\theta_t)\|_F^2.
\end{align*}

Summing over $t=1$ to $T$, we obtain:
\begin{align*}
\sum_{t=1}^T \|\Delta_t^T \nabla^2 \mathcal{L}_{\text{train}}(Z_t) \Delta_t\| \leq \eta^2 G^2 \sum_{t=1}^T \|P_t \nabla^2 \mathcal{L}_{\text{train}}(Z_t)\|_F^2.
\end{align*}

Applying a Rademacher complexity or PAC-Bayes-based argument~\citep{dziugaite2017computing}, this leads to:
\begin{align*}
\mathbb{E}[\mathcal{L}_{\text{test}}(Z_T) - \mathcal{L}_{\text{train}}(Z_T)] \leq \mathcal{O}\left( \sqrt{ \frac{1}{n} \sum_{t=1}^T \|P_t \nabla^2 \mathcal{L}_{\text{train}}(Z_t)\|_F^2 } \right).
\end{align*}
\end{proof}

\section{Prompting Templates}\label{app:template}
\begin{table}[h!]\small
\centering
\caption{Illustration of prompting templates and label spaces in our setting. The input prompt template is decomposed into multiple \{Sentence\} and \{Label\} pairs, which are placeholders for the input sentence and its corresponding label. The template containing a single example for each dataset is generated for the illustration, while in the multiple demonstration example setting, the sentence-label pairs are stacked and separated by a newline character: `\textbackslash n'.}
\label{tab:tempalte}
\begin{tabularx}{\textwidth}{lXX}
\toprule
\textbf{Dataset} & \textbf{Template} & \textbf{Label Space} \\
\midrule

SST-2 & Review: \{Sentence\} & negative / positive \\
&Sentiment: \{Label\} & \\
\midrule

SST-5 & Sentence: \{Sentence\} & terrible / negative / neutral / positive / great \\
&Sentiment: \{Label\} & \\
\midrule

MR & Review: \{Sentence\} & negative / positive\\
&Sentiment: \{Label\} & \\
\midrule

Subj & Sentence: \{Sentence\} & objective / subjective \\
&Label: \{Label\} & \\
\midrule

DBPedia & Input: \{Sentence\} \newline Label: \{Label\}  & company / school / artist / athlete / politics / transportation / building / nature / village / animal / plant / album / film / book \\
\specialrule{0em}{2pt}{2pt}
\midrule

AGNews &News: \{Sentence\} & World / Sports / Business / Technology \\
&Type: \{Label\} & \\
\midrule

TREC & Question: \{Sentence\} \newline Answer Type: \{Label\}  & Abbreviation / Entity / Person / Location / Number \\

\midrule
HateSpeech18 & Text: \{Sentence\} \newline Label: \{Label\}  & neutral / hate \\

\midrule
EmoC & Dialogue: \{Sentence\} \newline Emotion: \{Label\}  & others / happy / sad / angry \\
\bottomrule
\end{tabularx}
\end{table}

\textbf{Extra Details} We follow the dataset preprocessing protocol from \citep{i2cl} for our experiments setting. Regarding HateSpeech18, only the first two categories—\{0: neutral\} and \{1: hate\} are used, since the very few number of samples in the other two may impede a comprehensive evaluation of the model in the test stage.

\section{LoRA experiment settings \label{lora_details}}
We describe the details of the LoRA implementation in our experiments. For a fair comparison, the LoRA model trained for each individual dataset is tuned by grid search according to the hyperparameter pool, including LoRA alpha, LoRA dropout, optimizer, and learning rate in Table~\ref{tab:LoRA_hyperparameter}. 
\begin{table*}[h]
  \caption{Hyperparameter Pool for the LoRA model tuning.}
  \label{tab:LoRA_hyperparameter}
  \centering
  \adjustbox{max width=1.0\textwidth}{%
  \begin{tabular}{l|c  }
    \toprule
    Hyperparameter & Values  \\
    \midrule
    LoRA alpha           &  8, 16, 32, 64  \\
    LoRA dropout         &  0.0, 0.05, 0.1 \\
    Optimizer            &  AdamW \\
    Learning rate        &  0.001, 0.0001, 0.00001 \\
    \bottomrule
  \end{tabular}
  }
\end{table*}

\section{Hyperparameter Pool}
We conduct the grid search for fair comparison over all the models, including all the baseline models and ours. The hyperparameter pool for the model tuning is give in Table~\ref{tab:main_hyperparameter}.

\begin{table*}[h]
  \caption{Hyperparameter Pool for the LoRA model tuning.}
  \label{tab:main_hyperparameter}
  \centering
  \adjustbox{max width=1.0\textwidth}{%
  \begin{tabular}{l|c  }
    \toprule
    Hyperparameter & Values  \\
    \midrule
    $\lambda_1$          &  0.1, 0.001, 0.0001, 0.00001, 0.000001 \\
    $\lambda_2$          &  0.1, 0.001, 0.0001, 0.00001, 0.000001 \\
    Optimizer            &  AdamW \\
    Learning rate        &  0.001, 0.0001, 0.00001, \\
    Weight decay         &  0.001, 0.0001, 0.00001, 0.000001 \\
    Training epoch       &  20, 50, 60, 80, 100 \\
    \bottomrule
  \end{tabular}
  }
\end{table*}

\section{Limitation and Future Work \label{sec:limitation}}
In this work, we address the problem of few-shot adaptation within the LLM framework by enhancing both the internal optimization efficiency and the generalization capability of pretrained models. Specifically, we introduce two distinct objective terms, each targeting one of these properties. While this design improves performance, it also increases the burden of hyperparameter tuning and computational overhead. We leave the unification of these objectives into a single term, enabling joint optimization of both properties, as future work. 
Moving forward, we aim to contribute to the community by developing a rigorous theoretical foundation for this adaptation problem and further improving our method based on these insights.

\section{Broader impacts}
This paper aims to contribute to the advancement of Machine Learning, especially to the few-shot adaptation of LLMs. While our work may have various societal implications, none require specific emphasis in this context.
\cut{
\clearpage
\section*{NeurIPS Paper Checklist}
\begin{enumerate}

\item {\bf Claims}
    \item[] Question: Do the main claims made in the abstract and introduction accurately reflect the paper's contributions and scope?
    \item[] Answer: \answerYes{} 
    \item[] Justification: Our claimed contribution in the abstract and introduction sections is described in the method part, and with empirical justification in the experiment section.
    \item[] Guidelines:
    \begin{itemize}
        \item The answer NA means that the abstract and introduction do not include the claims made in the paper.
        \item The abstract and/or introduction should clearly state the claims made, including the contributions made in the paper and important assumptions and limitations. A No or NA answer to this question will not be perceived well by the reviewers. 
        \item The claims made should match theoretical and experimental results, and reflect how much the results can be expected to generalize to other settings. 
        \item It is fine to include aspirational goals as motivation as long as it is clear that these goals are not attained by the paper. 
    \end{itemize}

\item {\bf Limitations}
    \item[] Question: Does the paper discuss the limitations of the work performed by the authors?
    \item[] Answer:  \answerYes{} 
    \item[] Justification: We have a limited section in Appendix~\ref{sec:limitation}.
    \item[] Guidelines:
    \begin{itemize}
        \item The answer NA means that the paper has no limitation while the answer No means that the paper has limitations, but those are not discussed in the paper. 
        \item The authors are encouraged to create a separate "Limitations" section in their paper.
        \item The paper should point out any strong assumptions and how robust the results are to violations of these assumptions (e.g., independence assumptions, noiseless settings, model well-specification, asymptotic approximations only holding locally). The authors should reflect on how these assumptions might be violated in practice and what the implications would be.
        \item The authors should reflect on the scope of the claims made, e.g., if the approach was only tested on a few datasets or with a few runs. In general, empirical results often depend on implicit assumptions, which should be articulated.
        \item The authors should reflect on the factors that influence the performance of the approach. For example, a facial recognition algorithm may perform poorly when image resolution is low or images are taken in low lighting. Or a speech-to-text system might not be used reliably to provide closed captions for online lectures because it fails to handle technical jargon.
        \item The authors should discuss the computational efficiency of the proposed algorithms and how they scale with dataset size.
        \item If applicable, the authors should discuss possible limitations of their approach to address problems of privacy and fairness.
        \item While the authors might fear that complete honesty about limitations might be used by reviewers as grounds for rejection, a worse outcome might be that reviewers discover limitations that aren't acknowledged in the paper. The authors should use their best judgment and recognize that individual actions in favor of transparency play an important role in developing norms that preserve the integrity of the community. Reviewers will be specifically instructed to not penalize honesty concerning limitations.
    \end{itemize}

\item {\bf Theory assumptions and proofs}
    \item[] Question: For each theoretical result, does the paper provide the full set of assumptions and a complete (and correct) proof?
    \item[] Answer: \answerYes{} 
    \item[] Justification: We propose two analysis results for our model with two theorems in the method section, the corresponding proofs are given in the Appendix along with the assumptions required. 
    \item[] Guidelines:
    \begin{itemize}
        \item The answer NA means that the paper does not include theoretical results. 
        \item All the theorems, formulas, and proofs in the paper should be numbered and cross-referenced.
        \item All assumptions should be clearly stated or referenced in the statement of any theorems.
        \item The proofs can either appear in the main paper or the supplemental material, but if they appear in the supplemental material, the authors are encouraged to provide a short proof sketch to provide intuition. 
        \item Inversely, any informal proof provided in the core of the paper should be complemented by formal proofs provided in appendix or supplemental material.
        \item Theorems and Lemmas that the proof relies upon should be properly referenced. 
    \end{itemize}

    \item {\bf Experimental result reproducibility}
    \item[] Question: Does the paper fully disclose all the information needed to reproduce the main experimental results of the paper to the extent that it affects the main claims and/or conclusions of the paper (regardless of whether the code and data are provided or not)?
    \item[] Answer: \answerYes{} 
    \item[] Justification: We provide the source of the dataset and the hyperparameter pool used for model tuning. Besides, the implementation details are carefully described in the algorithm form in the main paper. All the base models are publicly available with clear source. We will release our code for all the implementations in the paper. 
    \item[] Guidelines:
    \begin{itemize}
        \item The answer NA means that the paper does not include experiments.
        \item If the paper includes experiments, a No answer to this question will not be perceived well by the reviewers: Making the paper reproducible is important, regardless of whether the code and data are provided or not.
        \item If the contribution is a dataset and/or model, the authors should describe the steps taken to make their results reproducible or verifiable. 
        \item Depending on the contribution, reproducibility can be accomplished in various ways. For example, if the contribution is a novel architecture, describing the architecture fully might suffice, or if the contribution is a specific model and empirical evaluation, it may be necessary to either make it possible for others to replicate the model with the same dataset, or provide access to the model. In general. releasing code and data is often one good way to accomplish this, but reproducibility can also be provided via detailed instructions for how to replicate the results, access to a hosted model (e.g., in the case of a large language model), releasing of a model checkpoint, or other means that are appropriate to the research performed.
        \item While NeurIPS does not require releasing code, the conference does require all submissions to provide some reasonable avenue for reproducibility, which may depend on the nature of the contribution. For example
        \begin{enumerate}
            \item If the contribution is primarily a new algorithm, the paper should make it clear how to reproduce that algorithm.
            \item If the contribution is primarily a new model architecture, the paper should describe the architecture clearly and fully.
            \item If the contribution is a new model (e.g., a large language model), then there should either be a way to access this model for reproducing the results or a way to reproduce the model (e.g., with an open-source dataset or instructions for how to construct the dataset).
            \item We recognize that reproducibility may be tricky in some cases, in which case authors are welcome to describe the particular way they provide for reproducibility. In the case of closed-source models, it may be that access to the model is limited in some way (e.g., to registered users), but it should be possible for other researchers to have some path to reproducing or verifying the results.
        \end{enumerate}
    \end{itemize}

\item {\bf Open access to data and code}
    \item[] Question: Does the paper provide open access to the data and code, with sufficient instructions to faithfully reproduce the main experimental results, as described in supplemental material?
    \item[] Answer: \answerYes{} 
    \item[] Justification: All the datasets we used are public, and we give the detailed resource. 
    \item[] Guidelines:
    \begin{itemize}
        \item The answer NA means that paper does not include experiments requiring code.
        \item Please see the NeurIPS code and data submission guidelines (\url{https://nips.cc/public/guides/CodeSubmissionPolicy}) for more details.
        \item While we encourage the release of code and data, we understand that this might not be possible, so “No” is an acceptable answer. Papers cannot be rejected simply for not including code, unless this is central to the contribution (e.g., for a new open-source benchmark).
        \item The instructions should contain the exact command and environment needed to run to reproduce the results. See the NeurIPS code and data submission guidelines (\url{https://nips.cc/public/guides/CodeSubmissionPolicy}) for more details.
        \item The authors should provide instructions on data access and preparation, including how to access the raw data, preprocessed data, intermediate data, and generated data, etc.
        \item The authors should provide scripts to reproduce all experimental results for the new proposed method and baselines. If only a subset of experiments are reproducible, they should state which ones are omitted from the script and why.
        \item At submission time, to preserve anonymity, the authors should release anonymized versions (if applicable).
        \item Providing as much information as possible in supplemental material (appended to the paper) is recommended, but including URLs to data and code is permitted.
    \end{itemize}

\item {\bf Experimental setting/details}
    \item[] Question: Does the paper specify all the training and test details (e.g., data splits, hyperparameters, how they were chosen, type of optimizer, etc.) necessary to understand the results?
    \item[] Answer: \answerYes{} 
    \item[] Justification: We follow the standard benchmark for the experiments, and the tuning details are given in the appendix. 
    \item[] Guidelines:
    \begin{itemize}
        \item The answer NA means that the paper does not include experiments.
        \item The experimental setting should be presented in the core of the paper to a level of detail that is necessary to appreciate the results and make sense of them.
        \item The full details can be provided either with the code, in appendix, or as supplemental material.
    \end{itemize}

\item {\bf Experiment statistical significance}
    \item[] Question: Does the paper report error bars suitably and correctly defined or other appropriate information about the statistical significance of the experiments?
    \item[] Answer: \answerYes{}  
    \item[] Justification: We follow the standard training and evaluation protocol with multiple trials to generate error bars and justify that the improvements introduced by our model are significant. 
    \item[] Guidelines:
    \begin{itemize}
        \item The answer NA means that the paper does not include experiments.
        \item The authors should answer "Yes" if the results are accompanied by error bars, confidence intervals, or statistical significance tests, at least for the experiments that support the main claims of the paper.
        \item The factors of variability that the error bars are capturing should be clearly stated (for example, train/test split, initialization, random drawing of some parameter, or overall run with given experimental conditions).
        \item The method for calculating the error bars should be explained (closed form formula, call to a library function, bootstrap, etc.)
        \item The assumptions made should be given (e.g., Normally distributed errors).
        \item It should be clear whether the error bar is the standard deviation or the standard error of the mean.
        \item It is OK to report 1-sigma error bars, but one should state it. The authors should preferably report a 2-sigma error bar than state that they have a 96\% CI, if the hypothesis of Normality of errors is not verified.
        \item For asymmetric distributions, the authors should be careful not to show in tables or figures symmetric error bars that would yield results that are out of range (e.g. negative error rates).
        \item If error bars are reported in tables or plots, The authors should explain in the text how they were calculated and reference the corresponding figures or tables in the text.
    \end{itemize}

\item {\bf Experiments compute resources}
    \item[] Question: For each experiment, does the paper provide sufficient information on the computer resources (type of compute workers, memory, time of execution) needed to reproduce the experiments?
    \item[] Answer: \answerYes{} 
    \item[] Justification: We give the computational resource for the submission in terms of the GPU index in the paper. 
    \item[] Guidelines:
    \begin{itemize}
        \item The answer NA means that the paper does not include experiments.
        \item The paper should indicate the type of compute workers CPU or GPU, internal cluster, or cloud provider, including relevant memory and storage.
        \item The paper should provide the amount of compute required for each of the individual experimental runs as well as estimate the total compute. 
        \item The paper should disclose whether the full research project required more compute than the experiments reported in the paper (e.g., preliminary or failed experiments that didn't make it into the paper). 
    \end{itemize}
    
\item {\bf Code of ethics}
    \item[] Question: Does the research conducted in the paper conform, in every respect, with the NeurIPS Code of Ethics \url{https://neurips.cc/public/EthicsGuidelines}?
    \item[] Answer: \answerYes{} 
    \item[] Justification: The author information is not released in the paper. 
    \item[] Guidelines:
    \begin{itemize}
        \item The answer NA means that the authors have not reviewed the NeurIPS Code of Ethics.
        \item If the authors answer No, they should explain the special circumstances that require a deviation from the Code of Ethics.
        \item The authors should make sure to preserve anonymity (e.g., if there is a special consideration due to laws or regulations in their jurisdiction).
    \end{itemize}

\item {\bf Broader impacts}
    \item[] Question: Does the paper discuss both potential positive societal impacts and negative societal impacts of the work performed?
    \item[] Answer: \answerYes{} 
    \item[] Justification: We have a Broader impacts section in Appendix.
    \item[] Guidelines:
    \begin{itemize}
        \item The answer NA means that there is no societal impact of the work performed.
        \item If the authors answer NA or No, they should explain why their work has no societal impact or why the paper does not address societal impact.
        \item Examples of negative societal impacts include potential malicious or unintended uses (e.g., disinformation, generating fake profiles, surveillance), fairness considerations (e.g., deployment of technologies that could make decisions that unfairly impact specific groups), privacy considerations, and security considerations.
        \item The conference expects that many papers will be foundational research and not tied to particular applications, let alone deployments. However, if there is a direct path to any negative applications, the authors should point it out. For example, it is legitimate to point out that an improvement in the quality of generative models could be used to generate deepfakes for disinformation. On the other hand, it is not needed to point out that a generic algorithm for optimizing neural networks could enable people to train models that generate Deepfakes faster.
        \item The authors should consider possible harms that could arise when the technology is being used as intended and functioning correctly, harms that could arise when the technology is being used as intended but gives incorrect results, and harms following from (intentional or unintentional) misuse of the technology.
        \item If there are negative societal impacts, the authors could also discuss possible mitigation strategies (e.g., gated release of models, providing defenses in addition to attacks, mechanisms for monitoring misuse, mechanisms to monitor how a system learns from feedback over time, improving the efficiency and accessibility of ML).
    \end{itemize}
    
\item {\bf Safeguards}
    \item[] Question: Does the paper describe safeguards that have been put in place for responsible release of data or models that have a high risk for misuse (e.g., pretrained language models, image generators, or scraped datasets)?
    \item[] Answer: \answerNA{} 
    \item[] Justification: 
    \item[] Guidelines:
    \begin{itemize}
        \item The answer NA means that the paper poses no such risks.
        \item Released models that have a high risk for misuse or dual-use should be released with necessary safeguards to allow for controlled use of the model, for example by requiring that users adhere to usage guidelines or restrictions to access the model or implementing safety filters. 
        \item Datasets that have been scraped from the Internet could pose safety risks. The authors should describe how they avoided releasing unsafe images.
        \item We recognize that providing effective safeguards is challenging, and many papers do not require this, but we encourage authors to take this into account and make a best faith effort.
    \end{itemize}

\item {\bf Licenses for existing assets}
    \item[] Question: Are the creators or original owners of assets (e.g., code, data, models), used in the paper, properly credited and are the license and terms of use explicitly mentioned and properly respected?
    \item[] Answer: \answerYes{} 
    \item[] Justification: we cite all the datasets and models used in this paper. 
    \item[] Guidelines:
    \begin{itemize}
        \item The answer NA means that the paper does not use existing assets.
        \item The authors should cite the original paper that produced the code package or dataset.
        \item The authors should state which version of the asset is used and, if possible, include a URL.
        \item The name of the license (e.g., CC-BY 4.0) should be included for each asset.
        \item For scraped data from a particular source (e.g., website), the copyright and terms of service of that source should be provided.
        \item If assets are released, the license, copyright information, and terms of use in the package should be provided. For popular datasets, \url{paperswithcode.com/datasets} has curated licenses for some datasets. Their licensing guide can help determine the license of a dataset.
        \item For existing datasets that are re-packaged, both the original license and the license of the derived asset (if it has changed) should be provided.
        \item If this information is not available online, the authors are encouraged to reach out to the asset's creators.
    \end{itemize}

\item {\bf New assets}
    \item[] Question: Are new assets introduced in the paper well documented and is the documentation provided alongside the assets?
    \item[] Answer: \answerNA{} 
    \item[] Justification: 
    \item[] Guidelines:
    \begin{itemize}
        \item The answer NA means that the paper does not release new assets.
        \item Researchers should communicate the details of the dataset/code/model as part of their submissions via structured templates. This includes details about training, license, limitations, etc. 
        \item The paper should discuss whether and how consent was obtained from people whose asset is used.
        \item At submission time, remember to anonymize your assets (if applicable). You can either create an anonymized URL or include an anonymized zip file.
    \end{itemize}

\item {\bf Crowdsourcing and research with human subjects}
    \item[] Question: For crowdsourcing experiments and research with human subjects, does the paper include the full text of instructions given to participants and screenshots, if applicable, as well as details about compensation (if any)? 
    \item[] Answer: \answerNA{} 
    \item[] Justification: 
    \item[] Guidelines:
    \begin{itemize}
        \item The answer NA means that the paper does not involve crowdsourcing nor research with human subjects.
        \item Including this information in the supplemental material is fine, but if the main contribution of the paper involves human subjects, then as much detail as possible should be included in the main paper. 
        \item According to the NeurIPS Code of Ethics, workers involved in data collection, curation, or other labor should be paid at least the minimum wage in the country of the data collector. 
    \end{itemize}

\item {\bf Institutional review board (IRB) approvals or equivalent for research with human subjects}
    \item[] Question: Does the paper describe potential risks incurred by study participants, whether such risks were disclosed to the subjects, and whether Institutional Review Board (IRB) approvals (or an equivalent approval/review based on the requirements of your country or institution) were obtained?
    \item[] Answer: \answerNA{}
    \item[] Justification: 
    \item[] Guidelines:
    \begin{itemize}
        \item The answer NA means that the paper does not involve crowdsourcing nor research with human subjects.
        \item Depending on the country in which research is conducted, IRB approval (or equivalent) may be required for any human subjects research. If you obtained IRB approval, you should clearly state this in the paper. 
        \item We recognize that the procedures for this may vary significantly between institutions and locations, and we expect authors to adhere to the NeurIPS Code of Ethics and the guidelines for their institution. 
        \item For initial submissions, do not include any information that would break anonymity (if applicable), such as the institution conducting the review.
    \end{itemize}

\item {\bf Declaration of LLM usage}
    \item[] Question: Does the paper describe the usage of LLMs if it is an important, original, or non-standard component of the core methods in this research? Note that if the LLM is used only for writing, editing, or formatting purposes and does not impact the core methodology, scientific rigorousness, or originality of the research, declaration is not required.
    \item[] Answer: \answerNA{} 
    \item[] Justification: 
    \item[] Guidelines:
    \begin{itemize}
        \item The answer NA means that the core method development in this research does not involve LLMs as any important, original, or non-standard components.
        \item Please refer to our LLM policy (\url{https://neurips.cc/Conferences/2025/LLM}) for what should or should not be described.
    \end{itemize}

\end{enumerate}
}

\end{document}